\providecommand{\SetAlgoLined}{\SetLine}
\def\eg{\emph{e.g. }}
\def\ie{\emph{i.e. }}
\def\vs{\emph{vs. }}
\def\etal{\emph{et al. }}
\newcommand*{\rom}[1]{\expandafter\@slowromancap\romannumeral #1@}
\newtheorem{prop}{Proposition}
\newtheorem{lemma}{Lemma}
\newtheorem{defi}{Definition}
\newtheorem{thm}{Theorem}
\newtheorem{cor}{Corollary}
\title{Convergent Block Coordinate Descent for Training Tikhonov Regularized Deep Neural Networks}
\author{
	Ziming Zhang and Matthew Brand \\
	Mitsubishi Electric Research Laboratories (MERL) \\	
	Cambridge, MA 02139-1955 \\
	\texttt{\{zzhang, brand\}@merl.com} \\
	%% examples of more authors
	%% \And
	%% Coauthor \\
	%% Affiliation \\
	%% Address \\
	%% \texttt{email} \\
	%% \AND
	%% Coauthor \\
	%% Affiliation \\
	%% Address \\
	%% \texttt{email} \\
	%% \And
	%% Coauthor \\
	%% Affiliation \\
	%% Address \\
	%% \texttt{email} \\
	%% \And
	%% Coauthor \\
	%% Affiliation \\
	%% Address \\
	%% \texttt{email} \\
}
\begin{document}
	% \nipsfinalcopy is no longer used
	
\maketitle
	
\begin{abstract}
By lifting the ReLU function into a higher dimensional space, we develop a smooth multi-convex formulation for training feed-forward deep neural networks (DNNs). This allows us to develop a block coordinate descent (BCD) training algorithm consisting of a sequence of numerically well-behaved convex optimizations. Using ideas from proximal point methods in convex analysis, we prove that this BCD algorithm will converge globally to a stationary point with R-linear convergence rate of order one. In experiments with the MNIST database, DNNs trained with this BCD algorithm consistently yielded better test-set error rates than identical DNN architectures trained via all the stochastic gradient descent (SGD) variants in the Caffe toolbox.
\end{abstract}
	
\section{Introduction}
Feed-forward deep neural networks (DNNs) are function approximators wherein weighted combinations inputs are filtered through nonlinear activation functions that are organized into a cascade of fully connected (FC) hidden layers.  In recent years DNNs have become the tool of choice for many research areas such as machine translation and computer vision.% perception tasks such as image classification \cite{ILSVRC15}, object detection \cite{girshick2014rich}, and segmentation \cite{hariharan2015hypercolumns}. 
%In other words, DNNs learn highly non-convex mapping functions between inputs and outputs.
	
The objective function for training a DNN is highly non-convex, leading to numerous obstacles to global optimization \cite{choromanska2015loss}, notably proliferation of saddle points \cite{dauphin2014identifying} and prevalence of local extrema that offer poor generalization off the training sample \cite{chaudhari2016entropy}.  These observations have motivated regularization schemes to smooth or simplify the energy surface, either explicitly such as weight decay \cite{krogh1991simple} or implicitly such as dropout \cite{srivastava2014dropout} and batch normalization \cite{ioffe2015batch}, so that the solutions are more robust, \ie better generalized to test data.

Training algorithms face many numerically difficulties that can make it difficult to even find a local optimum. One of the well-known issues is so-called vanishing gradient in back propagation (chain rule differentiation) \cite{chapter-gradient-flow-2001}, \ie the long dependency chains between hidden layers (and corresponding variables) tend to drive gradients to zero far from the optimum.  This issue leads to very slow improvements of the model parameters, an issue that becomes more and more serious in deeper networks \cite{glorot2010understanding}. The vanishing gradient problem can be partially ameliorated by using non-saturating activation functions such as rectified linear unit (ReLU) \cite{lecun2015deep}, and network architectures that have shorter input-to-output paths such as ResNet \cite{he2016deep}. %One reason for this issue is about the activation functions that may be saturated very quickly. For this case rectified linear unit (ReLU) \cite{nair2010rectified} has been found to be superior to other activation functions such as sigmoid. In fact ReLU is the most popular activation function in current literature \cite{lecun2015deep}. Another reason is about network architectures that may be quite deep. Recently a better architecture, ResNet \cite{he2016deep}, has been proposed that can be trained over 1000 layers, where the concept of ``skip layer'' was introduced for modeling the residual of the input signals. %Empirically this appears to be much easier than modeling outputs directly based on inputs. 
The saddle-point problem has been addressed by switching from deterministic gradient descent to stochastic gradient descent (SGD), which can achieve weak convergence in probability \cite{bottou2016optimization}. %To improve learning algorithms, several works have been proposed such as finding better initialization \cite{daniely2016toward}. %\cite{daniely2016toward, he2015delving, mishkin2015all, sutskever2013importance} 
	%or regularization \cite{ioffe2015batch,srivastava2014dropout}. 
	%\cite{ioffe2015batch,kingma2014adam, kingma2015variational, srivastava2014dropout}
	%Recent work \cite{bottou2016optimization} showed that SGD can achieve weak convergence in deep learning. 
 Classic proximal-point optimization methods such as the alternating direction method of multipliers (ADMM) have also shown promise for DNN training \cite{taylor2016training,Zhang_2016_CVPR}, but in the DNN setting their convergence properties remain unknown.

{\bf Contributions:} In this paper, 
\begin{enumerate}[label*=\arabic*.]
\item We propose a novel Tikhonov regularized multi-convex formulation for deep learning, which can be used to learn both dense and sparse DNNs;
\item We propose a novel block coordinate descent (BCD) based learning algorithm accordingly, which can guarantee to globally converge to stationary points with R-linear convergence rate of order one;
		%	\item We show that our learning algorithm is suitable for parallel computing;
\item We demonstrate empirically that DNNs estimated with BCD can produce better representations than DNNs estimated with SGD, in the sense of yielding better test-set classification rates. % in terms of both classification accuracy and computational efficiency.
	\end{enumerate}
	
% 	\begin{wrapfigure}{r}{0.4\linewidth} 
% 		\vspace{-15pt}
% 		\begin{center}
% 			\includegraphics[width=\linewidth]{surf.eps}\vspace{-15pt}
% 			\caption{\footnotesize Illustration of {\bf (left)} the surface of our multi-convex objective and {\bf (right)} its path of minimizers from top-left to bottom-right. Note the loss function used here is the Euclidean loss without losing generality.}
% 			\label{fig:surface}
% 		\end{center}
% 		\vspace{-10pt}
% 		%  \vspace{1pt}
% 	\end{wrapfigure} 
	Our Tikhonov regularization is motivated by the fact that the ReLU activation function is equivalent to solving a smoothly penalized projection problem in a higher-dimensional Euclidean space.  We use this to build a  Tikhonov regularization matrix which encodes all the information of the networks, %\footnote{In fact there exists a function that can map an arbitrary ReLU based DNNs into a unique Tikhonov matrix.}
\ie the architectures as well as their associated weights. In this way our training objective can be divided into three sub-problems, namely, (1) Tikhonov regularized inverse problem \cite{willoughby1979solutions}, (2) least-square regression, and (3) learning classifiers. Since each sub-problem is convex and coupled with the other two, our overall objective is multi-convex. %Fig.~\ref{fig:surface} illustrates the surface generated by our objective using various inputs where the red curve denotes the solution path when minimizing our objective using alternating optimization. As we see, our surface is much smoother than those generated by the training loss in deep learning without regularization (\eg Figure 2 in \cite{chaudhari2016entropy}). The minimizers decrease our objective monotonically and smoothly as well. These observations are good indicators for global convergence of our learning algorithm based on our objective. 
%Further by restricting variables within different domains that are convex sets, we can easily learn the networks with either dense or sparse architectures.
	
	Block coordinate descent (BCD) is often used for problems where finding an exact solution of a sub-problem with respect to a subset (block) of variables is much simpler than finding the solution for all variables simultaneously \cite{nesterov2012efficiency}. In our case, each sub-problem isolates block of variables which can be solved easily (\eg close-form solutions exist). %Therefore, we propose a novel BCD based algorithm for training DNNs with ReLU as the activation functions, and prove its global convergence as well as the convergence rate. %To our best knowledge, we are the first to provide such convergence results for training DNNs. 
One of the advantages of our decomposition into sub-problems is that the long-range dependency between hidden layers is captured within a sub-problem whose solution helps to propagate the information between inputs and outputs to stabilize the networks (\ie convergence). Therefore, {\em it does not suffer from vanishing gradient at all.}
	%the minimizer for each block variables can be found by solving a small-size quadratic programming (QP) problem. To accelerate the calculation, we propose approximating the minimizers by projecting the close-form solutions (\ie ignoring the constraints in QP) onto convex sets. Since the QP problems are small-size, the calculation of inverse matrices involved in computing close-form solutions is not expensive. Empirically our algorithms still can converge, but much faster.
	In our experiments, we demonstrate the effectiveness and efficiency of our algorithm by comparing with SGD based solvers.% in terms of convergence, accuracy, and running time. %Further we compare ours with \cite{Zhang_2016_CVPR} on learning very deep models for supervised hashing.

	\subsection{Related Work}
	{\bf (1) Stochastic Regularization (SR) \vs Local Regularization \vs Tikhonov Regularization:} SR is a widely-used technique in deep learning to prevent the training from overfitting. The basic idea in SR is to multiple the network weights with some random variables so that the learned network is more robust and generalized to test data. Dropout \cite{srivastava2014dropout} and its variants such like \cite{kingma2015variational} are classic examples of SR. Gal \& Ghahramani \cite{gal2015modern} showed that SR in deep learning can be considered as approximate variational inference in Bayesian neural networks.%More details on different regularization techniques can be seen in \cite{Goodfellow-et-al-2016}.
	
	Recently Baldassi \etal \cite{baldassi2015subdominant} proposed smoothing non-convex functions with local entropy, and latter Chaudhari \etal \cite{chaudhari2016entropy} proposed Entropy-SGD for training DNNs. The idea behind such methods is to locate solutions locally within large flat regions of the energy landscape that favors good generalization. In \cite{chaudhari2017deep} Chaudhari \etal provided the mathematical justification for these methods from the perspective of partial differential equations (PDEs)
	
	In contrast, our Tikhonov regularization tends to smooth the non-convex loss {\em explicitly, globally, and data-dependently}. We deterministically learn the Tikhonov matrix as well as the auxiliary variables in the ill-posed inverse problems. The Tikhonov matrix encodes all the information in the network, and the auxiliary variables represent the ideal outputs of the data from each hidden layer that minimize our objective. Conceptually these variables work similarly as target propagation \cite{bengio2014auto}.

	{\bf (2) SGD \vs BCD:} 
	In \cite{bottou2016optimization} Bottou \etal proved weak convergence of SGD for non-convex optimization. Ghadimi \& Lan \cite{ghadimi2013stochastic} showed that SGD can achieve convergence rates that scale as $O\left(t^{-1/2}\right)$ for non-convex loss functions if the stochastic gradient is unbiased with bounded variance, where $t$ denotes the number of iterations.
	%In general for differentiable convex optimization problems, BCD can converge with linear rate \cite{beck2013convergence}, same as SGD \cite{nemirovski2009robust}. However, for certain problems such as regularized least-squares BCD can even converge with quadratic rate \cite{qin2013efficient}. This is of great interest to us because in our problem most of sub-problems for BCD are regularized least-squares (see Section \ref{ssec:BCD}). 
	
	For non-convex optimization, the BCD based algorithm in \cite{xu2014globally} was proven to converge globally to stationary points. For parallel computing another BCD based algorithm, namely Parallel Successive Convex Approximation (PSCA), was proposed in \cite{razaviyayn2014parallel} and proven to be convergent.
	
	{\bf (3) ADMM \vs BCD:}
	Alternating direction method of multipliers (ADMM) is a proximal-point optimization framework from the 1970s and recently championed by Boyd \cite{boyd2011distributed}. It breaks a nearly-separable problem into loosely-coupled smaller problems, some of which can be solved independently and thus in parallel. ADMM offers linear convergence for strictly convex problems, and for certain special non-convex optimization problems, ADMM can also converge \cite{nishihara2015general,wang2015global}. %, similar to BCD. One of the advantages using ADMM is that it allows to update variables distributively or in parallel, and in certain cases it can guarantee to converge as well \cite{deng2014parallel}.
Unfortunately, thus far there is no evidence or mathematical argument that DNN training is one of these special cases. Therefore, even though empirically it has been successfully applied to DNN training \cite{taylor2016training,Zhang_2016_CVPR}, it still lacks of convergence guarantee. %In both papers gradient descent is used to update variables in ADMM.

	Our BCD-based DNN training algorithm is also amenable to ADMM-like parallelization.  More importantly, as we prove in Sec. \ref{sec:convergence}, it will converge globally to stationary points with R-linear convergence.
	%\noindent
	%{\bf (3) Learning sparse networks:} One of the advantages in sparse networks is its  high testing efficiency, which is desirable in many real-world applications such as embedding systems. To learn such networks, a common way is to design special network architectures particularly for the purpose. For instance, Iandola \etal \cite{iandola2016squeezenet} designed a network with smaller sizes of filters and much fewer parameters in layers. Rastegari \etal \cite{rastegari2016xnor} design a network with binary weights. Such networks have fixed architectures, lacking of flexibility in controlling sparsity. Another way is to prune the learned deep models to simplify the architectures as well as preserving the accuracy. This types of approaches can be taken as post-processing to intentionally improve the testing efficiency. For example, Han \etal \cite{han2015learning} proposed pruning redundant connections by learning only the important connections. Such methods usually lack of theoretical guarantee on convergence.
	%Different from these works, our regularized block multi-convex formulation allows to learn a sparse DNN with the ability of controlling sparsity in network weights. Meanwhile our BCD learning algorithms guarantee to converge for learning such sparse networks.
    
\section{Tikhonov Regularization for Deep Learning}

\subsection{Problem Setup}	
    	 
    {\bf Key Notations:} We denote $\mathbf{x}_i\in\mathbb{R}^{d_0}$ as the $i$-th training data, $y_i\in\mathcal{Y}$ as its corresponding class label from label set $\mathcal{Y}$, $\mathbf{u}_{i,n}\in\mathbb{R}^{d_n}$ as the output feature for $\mathbf{x}_i$ from the $n$-th ($1\leq n\leq N$) hidden layer in our network, $\mathbf{W}_{n,m}\in\mathbb{R}^{d_n\times d_m}$ as the weight matrix between the $n$-th and $m$-th hidden layers, $\mathcal{M}_n$ as the input layer index set for the $n$-th hidden layer, $\mathbf{V}\in\mathbb{R}^{d_{N+1}\times d_N}$ as the weight matrix between the last hidden layer and the output layer, $\mathcal{U}, \mathcal{V}, \mathcal{W}$ as nonempty closed convex sets, and $\ell(\cdot,\cdot)$ as a {\em convex} loss function.      
    	        
	{\bf Network Architectures:} In our networks we only consider ReLU as the activation functions.  To provide short paths through the DNN, we allow {\em multi-input ReLU} units which can take the outputs from multiple previous layers as its inputs. 
    
    \begin{wrapfigure}{r}{0.35\linewidth} 
		\vspace{-27pt}
		\begin{center}
			\includegraphics[width=\linewidth]{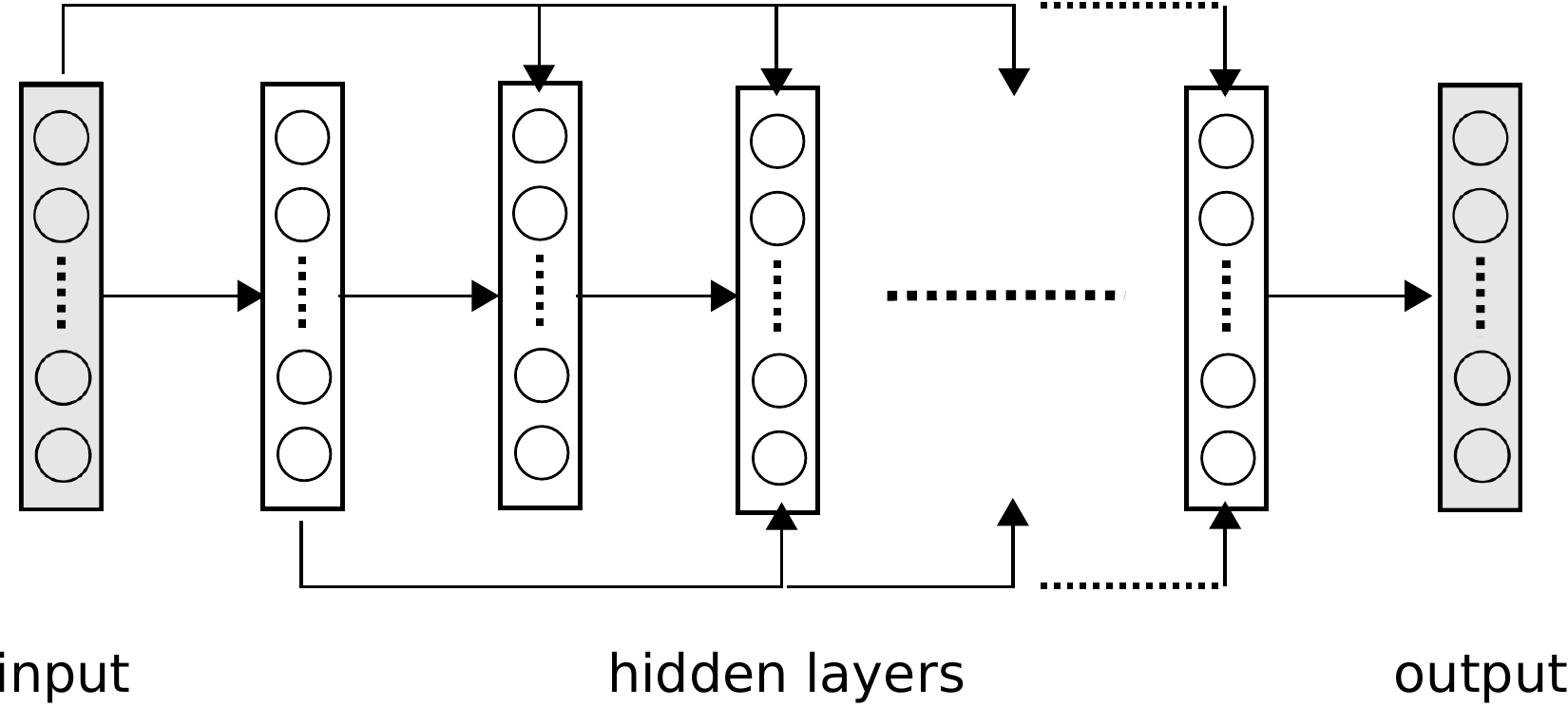}\vspace{-10pt}
			\caption{\footnotesize Illustration of DNN architectures that we consider in the paper.}
			\label{fig:network}
		\end{center}
		\vspace{-15pt}
		%  \vspace{1pt}
	\end{wrapfigure}
    Fig. \ref{fig:network} illustrates a network architecture that we consider, where the third hidden layers (with ReLU activations), for instance, takes the input data and the outputs from the first and second hidden layers as its inputs. Mathematically, we define our multi-input ReLU function at layer $n$ for data $\mathbf{x}_i$ as:
	\begin{align}\label{eqn:mir}
	\hspace{-2mm}\mathbf{u}_{i,n} = \left\{
	\begin{array}{ll}
	\mathbf{x}_i, & \mbox{if} \, n=0 \\
	\max\left\{\mathbf{0}, \sum_{m\in\mathcal{M}_n}\mathbf{W}_{n,m}\mathbf{u}_{i,m}\right\}, & \mbox{otherwise}
	\end{array}
	\right.
	\end{align}
	where $\max$ denotes the entry-wise max operator and $\mathbf{0}$ denotes a $d_n$-dim zero vector. Note that multi-input ReLUs can be thought of as conventional ReLU with skip layers \cite{he2016deep} where $\mathbf{W}$'s are set to identity matrices accordingly. 
    
%     In general the topology of our networks can be very dense (even fully connected directed graphs). Such {\em cross-layer} connections have been involved in recent deep models such as ResNet \citet{he2016deep} and highway networks \citet{srivastava2015training}.
    
    {\bf Conventional Objective for Training DNNs with ReLU:} We write down the general objective\footnote{For simplicity in this paper we always presume that the domain of each variable contains the regularization, \eg $\ell_2$-norm, without showing it in the objective explicitly.} in a recursive way as used in \cite{Zhang_2016_CVPR} as follows for clarity:	
	\begin{align}\label{eqn:obj}
	& \min_{\mathbf{V}\in\mathcal{V}, \tilde{\mathcal{W}}\subseteq\mathcal{W}} \sum_i\ell(y_i, \mathbf{V}\mathbf{u}_{i,N}), \, \mbox{s.t.} \; \mathbf{u}_{i,n} = \max\left\{\mathbf{0}, \sum_{m\in\mathcal{M}_n}\mathbf{W}_{n,m}\mathbf{u}_{i,m}\right\}, \mathbf{u}_{i,0}=\mathbf{x}_i, 
	\forall i, \forall n,
	\end{align}
	where $\tilde{\mathcal{W}}=\{\mathbf{W}_{n,m}\}$. Note that we separate the last FC layer (with weight matrix $\mathbf{V}$) from the rest hidden layers (with weight matrices in $\tilde{\mathcal{W}}$) intentionally, because $\mathbf{V}$ is for learning classifiers while $\tilde{\mathcal{W}}$ is for learning useful features. The network architectures we use in this paper are mainly for extracting features, on top of which any arbitrary classifier can be learned further.
    
    Our goal is to optimize Eq. \ref{eqn:obj}. To that end, we propose a novel BCD based algorithm which can solve the relaxation of Eq. \ref{eqn:obj} using Tikhonov regularization with convergence guarantee.

% 	\subsection{Our Training Objective}
% 	\begin{wrapfigure}{r}{0.3\linewidth} 
% 		\vspace{-20pt}
% 		\begin{center}
% 			\includegraphics[width=\linewidth]{dependency.eps}
% 			\caption{Abstract view of long-dependency decomposition between $x$ and $y$.}
% 			\label{fig:dependency}
% 		\end{center}
% 		\vspace{-10pt}
% 		%  \vspace{1pt}
% 	\end{wrapfigure} 
% 	The main difficulty in solving Eq. \ref{eqn:obj} lies in the nature of {\em long} dependency between the input $x$ and the output $y$, leading to a highly non-convex optimization problem. In order to solve the difficulty, we introduce auxiliary variables to break down the long dependency into multiple {\em short} dependencies and update variables accordingly, as did in \citet{Zhang_2016_CVPR}. 
	
% 	Fig. \ref{fig:dependency} illustrates our dependency decomposition. Traditional approaches try to learn deep models to map $x$ to $y$ explicitly, illustrated by the black line. In our method, denoted by red lines, we try to model the relationship between $x$ and $y$ implicitly through a sequence of auxiliary variables $z$, which break down the long-dependency. From the perspective of dynamic systems, the functionality of such auxiliary variables is to help propagate the information between inputs and outputs so that the network can be stabilized eventually. Based on this idea, we propose a novel {\em block multi-convex} formulation for training DNNs.
	
	\subsection{Reinterpretation of ReLU}
% 	One issue in \citet{Zhang_2016_CVPR} is that the outputs from ReLU have been put into the training objective directly based on the augmented Lagrangian method, which essentially does not simplify the problem. In contrast, in this paper we try to reinterpret ReLU for the purpose of simplifying the learning.
	
	The ReLU, ordinarily defined as $\mathbf{u}=\max\{\mathbf{0}, \mathbf{x}\}$ for  $\mathbf{x}\in\mathbb{R}^d$, can be viewed as a projection onto a convex set (POCS) \citet{bauschke1996projection}, and thus rewritten as a simple smooth convex optimization problem,
	\begin{align}\label{eqn:relu}
	\max\{\mathbf{0}, \mathbf{x}\} \equiv \arg\min_{\mathbf{u}\in\mathcal{U}}\|\mathbf{u}-\mathbf{x}\|_2^2,
	\end{align}
	where $\|\cdot\|_2$ denotes the $\ell_2$ norm of a vector and $\mathcal{U}$ here is the nonnegative closed half-space. This non-negative least squares problem becomes the basis of our lifted objective.
		
	\subsection{Our Tikhonov Regularized Objective}
	We use Eq. \ref{eqn:relu} to lift and unroll the general training objective in Eq. \ref{eqn:obj} obtaining the relaxation:
	\begin{align}\label{eqn:our_obj}
	\min_{\tilde{\mathcal{U}}\subseteq\mathcal{U}, \mathbf{V}\in\mathcal{V}, \tilde{\mathcal{W}}\subseteq\mathcal{W}} & f(\tilde{\mathcal{U}}, \mathbf{V}, \tilde{\mathcal{W}}) \stackrel{\Delta}{=} \sum_i\ell(y_i, \mathbf{V}\mathbf{u}_{i,N}) + \sum_{i, n}\frac{\gamma_n}{2}\left\|\mathbf{u}_{i,n} - \sum_{m\in\mathcal{M}_n}\mathbf{W}_{n,m}\mathbf{u}_{i,m}\right\|_2^2, \\
	\mbox{s.t.} \hspace{9mm} & \mathbf{u}_{i,n} \geq \mathbf{0}, \mathbf{u}_{i,0}=\mathbf{x}_i, \forall i, \forall n\geq 1, \nonumber
	\end{align}
	where $\tilde{\mathcal{U}}=\{\mathbf{u}_{i,n}\}$ and $\gamma_n\geq0, \forall n$ denote predefined regularization constants.  Larger $\gamma_n$ values force $\mathbf{u}_{i,n}, \forall i$ to more closely approximate the output of ReLU at the $n$-th hidden layer. %Note that if $\beta=0$ and $\gamma_n=+\infty, \forall n$, our objective in Eq. \ref{eqn:our_obj} is equivalent to that in Eq. \ref{eqn:obj}.
    Arranging $\mathbf{u}$ and $\gamma$ terms into a matrix $\mathbf{Q}$, we rewrite Eq. \ref{eqn:our_obj} in familiar form as a Tikhonov regularized objective:
    \begin{align}\label{eqn:f}
    \min_{\tilde{\mathcal{U}}\subseteq\mathcal{U}, \mathbf{V}\in\mathcal{V}, \tilde{\mathcal{W}}\subseteq\mathcal{W}} f(\tilde{\mathcal{U}}, \mathbf{V}, \tilde{\mathcal{W}}) \equiv \sum_i\left\{\ell(y_i, \mathbf{V}\mathbf{P}\mathbf{u}_{i}) + \frac{1}{2}\mathbf{u}_i^T\mathbf{Q}(\tilde{\mathcal{W}})\mathbf{u}_i\right\}.
    \end{align}
Here $\mathbf{u}_i, \forall i$ denotes the concatenating vector of all hidden outputs as well as the input data, \ie $\mathbf{u}_i=[\mathbf{u}_{i,n}]_{n=0}^N, \forall i$, $\mathbf{P}$ is a predefined constant matrix so that $\mathbf{P}\mathbf{u}_i=\mathbf{u}_{i,N}, \forall i$, and $\mathbf{Q}(\tilde{\mathcal{W}})$ denotes another matrix constructed by the weight matrix set $\tilde{\mathcal{W}}$.
    
    \begin{prop}\label{lem:Tikhonov}
    $\mathbf{Q}(\tilde{\mathcal{W}})$ is positive semidefinite, leading to the following Tikhonov regularization: $$\mathbf{u}_i^T\mathbf{Q}(\tilde{\mathcal{W}})\mathbf{u}_i\equiv(\boldsymbol{\Gamma}\mathbf{u}_i)^T(\boldsymbol{\Gamma}\mathbf{u}_i)=\|\boldsymbol{\Gamma}\mathbf{u}_i\|_2^2, \exists \boldsymbol{\Gamma}, \forall i,$$
    where $\boldsymbol{\Gamma}$ is the Tikhonov matrix.
    \end{prop}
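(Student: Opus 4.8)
The plan is to make the matrix $\mathbf{Q}(\tilde{\mathcal{W}})$ explicit as a Gram-type matrix and then read off both positive semidefiniteness and the factor $\boldsymbol{\Gamma}$ directly. First I would note that for each fixed $i$ and each layer $n$, the residual $\mathbf{u}_{i,n}-\sum_{m\in\mathcal{M}_n}\mathbf{W}_{n,m}\mathbf{u}_{i,m}$ appearing in the Tikhonov term of Eq.~\ref{eqn:our_obj} is a \emph{linear, homogeneous} function of the concatenated vector $\mathbf{u}_i=[\mathbf{u}_{i,n}]_{n=0}^N$. Consequently there is a block matrix $\mathbf{A}_n(\tilde{\mathcal{W}})$ --- carrying the identity in the block column corresponding to layer $n$, the blocks $-\mathbf{W}_{n,m}$ in the block columns $m\in\mathcal{M}_n$, and zeros elsewhere --- for which $\mathbf{A}_n\mathbf{u}_i=\mathbf{u}_{i,n}-\sum_{m\in\mathcal{M}_n}\mathbf{W}_{n,m}\mathbf{u}_{i,m}$.

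Substituting this into the regularizer gives
\begin{align}\label{eqn:prop_expand}
\sum_{n}\frac{\gamma_n}{2}\left\|\mathbf{u}_{i,n}-\sum_{m\in\mathcal{M}_n}\mathbf{W}_{n,m}\mathbf{u}_{i,m}\right\|_2^2 = \frac{1}{2}\mathbf{u}_i^T\left(\sum_n \gamma_n \mathbf{A}_n^T\mathbf{A}_n\right)\mathbf{u}_i,
\end{align}
which matches Eq.~\ref{eqn:f} under the identification $\mathbf{Q}(\tilde{\mathcal{W}})=\sum_n \gamma_n\mathbf{A}_n^T\mathbf{A}_n$. Each $\mathbf{A}_n^T\mathbf{A}_n$ is a Gram matrix and hence positive semidefinite, and since $\gamma_n\geq 0$ by assumption, $\mathbf{Q}$ is a nonnegative combination of positive semidefinite matrices and is therefore itself positive semidefinite. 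For the Tikhonov factor I would simply stack the scaled blocks vertically, $\boldsymbol{\Gamma}\stackrel{\Delta}{=}[\sqrt{\gamma_n}\,\mathbf{A}_n]_n$, so that $\boldsymbol{\Gamma}^T\boldsymbol{\Gamma}=\sum_n\gamma_n\mathbf{A}_n^T\mathbf{A}_n=\mathbf{Q}$ and hence $\mathbf{u}_i^T\mathbf{Q}\mathbf{u}_i=\|\boldsymbol{\Gamma}\mathbf{u}_i\|_2^2$ for every $i$. Alternatively, the existence of some $\boldsymbol{\Gamma}$ follows at once from the standard symmetric square-root (or Cholesky) factorization of any positive semidefinite matrix.

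There is no deep obstacle in this argument; the only real work is bookkeeping. The step that must be carried out carefully is verifying the block structure of $\mathbf{A}_n$ --- in particular, checking that when the squares $\|\mathbf{A}_n\mathbf{u}_i\|_2^2$ are expanded and summed over $n$, the cross terms between outputs of different layers assemble into exactly the off-diagonal blocks of the $\mathbf{Q}(\tilde{\mathcal{W}})$ obtained by folding the $\mathbf{u}$ and $\gamma$ terms into a single matrix. Once identity~\eqref{eqn:prop_expand} is confirmed, both the positive semidefiniteness and the existence of $\boldsymbol{\Gamma}$ are immediate.
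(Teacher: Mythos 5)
Your proof is correct, and it fills in exactly the construction the paper leaves implicit when it says the $\mathbf{u}$ and $\gamma$ terms are ``arranged into a matrix $\mathbf{Q}$'': writing each layer's residual as $\mathbf{A}_n\mathbf{u}_i$ with identity and $-\mathbf{W}_{n,m}$ blocks gives $\mathbf{Q}=\sum_n\gamma_n\mathbf{A}_n^T\mathbf{A}_n$, from which positive semidefiniteness and the factor $\boldsymbol{\Gamma}=[\sqrt{\gamma_n}\,\mathbf{A}_n]_n$ follow immediately. The paper states this proposition without proof, and your argument is the standard one it intends; the only cosmetic omission is noting that the residual terms exist only for $n\geq 1$ (since $\mathbf{u}_{i,0}=\mathbf{x}_i$ is fixed), which does not affect the conclusion.
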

%     \begin{proof}
%     $\mathbf{Q}(\tilde{\mathcal{W}})$ is constructed based on the sum of least-squares in Eq. \ref{eqn:our_obj}, leading to this lemma. 
%     \end{proof}
    
    \begin{defi}[Block Multi-Convexity \cite{xu2013block}]\label{def:bmc}
    A function $f$ is {\em block multi-convex} if for each block variable $\mathbf{x}_i, \forall i$, $f$ is a convex function of $\mathbf{x}_i$ while all the other blocks are fixed.
    \end{defi}
    
    \begin{prop}\label{lem:multi-convex}
    $f(\tilde{\mathcal{U}}, \mathbf{V}, \tilde{\mathcal{W}})$ is block multi-convex.
    \end{prop}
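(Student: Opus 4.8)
The plan is to verify the defining property of Definition \ref{def:bmc} directly for the three natural blocks $\tilde{\mathcal{U}}$, $\mathbf{V}$, and $\tilde{\mathcal{W}}$, working from the compact form in Eq. \ref{eqn:f}. Throughout I would lean on two elementary facts from convex analysis: the composition of a convex function with an affine map is convex, and a nonnegative sum of convex functions is convex. The constraint sets $\mathcal{U},\mathcal{V},\mathcal{W}$ are convex by assumption, so feasibility is never at issue. Since convexity on a whole group implies convexity on every sub-block obtained by fixing some of its coordinates (restriction to an affine slice preserves convexity), it suffices to treat each of the three groups as a single block.

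For the block $\tilde{\mathcal{U}}$ (equivalently each concatenated $\mathbf{u}_i$) with $\mathbf{V},\tilde{\mathcal{W}}$ held fixed, the loss term $\ell(y_i,\mathbf{V}\mathbf{P}\mathbf{u}_i)$ is the convex loss $\ell(y_i,\cdot)$ composed with the affine map $\mathbf{u}_i\mapsto\mathbf{V}\mathbf{P}\mathbf{u}_i$, hence convex in $\mathbf{u}_i$; and the regularizer $\tfrac{1}{2}\mathbf{u}_i^T\mathbf{Q}(\tilde{\mathcal{W}})\mathbf{u}_i$ is convex precisely because $\mathbf{Q}(\tilde{\mathcal{W}})$ is positive semidefinite, which is exactly the content of Proposition \ref{lem:Tikhonov}. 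This is the one place where the coupling matters: a single $\mathbf{u}_{i,n}$ appears both as the target at layer $n$ and as an input to every downstream layer that lists $n$ in its index set, but the PSD form absorbs all these occurrences at once, so no separate bookkeeping is needed.

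For the block $\mathbf{V}$ with $\tilde{\mathcal{U}},\tilde{\mathcal{W}}$ fixed, the quadratic term is constant, and $\ell(y_i,\mathbf{V}\mathbf{P}\mathbf{u}_i)$ is again $\ell(y_i,\cdot)$ composed with the map $\mathbf{V}\mapsto\mathbf{V}(\mathbf{P}\mathbf{u}_i)$ that is linear in $\mathbf{V}$, hence convex; summing over $i$ preserves convexity. For the block $\tilde{\mathcal{W}}$ with $\tilde{\mathcal{U}},\mathbf{V}$ fixed, the loss term no longer depends on the weights and is constant, so only the regularizer matters. Here I would switch back to the unrolled form of Eq. \ref{eqn:our_obj} rather than the $\mathbf{u}_i^T\mathbf{Q}(\tilde{\mathcal{W}})\mathbf{u}_i$ form, since each summand $\tfrac{\gamma_n}{2}\big\|\mathbf{u}_{i,n}-\sum_{m\in\mathcal{M}_n}\mathbf{W}_{n,m}\mathbf{u}_{i,m}\big\|_2^2$ is the squared $\ell_2$ norm (a convex function) composed with the map $\tilde{\mathcal{W}}\mapsto\mathbf{u}_{i,n}-\sum_{m}\mathbf{W}_{n,m}\mathbf{u}_{i,m}$, which is affine in the weights; convexity in $\tilde{\mathcal{W}}$ then follows by summing over $i$ and $n$.

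The calculation is routine once the right representation is chosen, so I do not anticipate a genuine obstacle. The only subtlety worth flagging is this choice of representation for the $\tilde{\mathcal{W}}$ block: the quadratic-form expression $\mathbf{u}_i^T\mathbf{Q}(\tilde{\mathcal{W}})\mathbf{u}_i$ depends on the weights in a way whose convexity is not transparent (the entries of $\mathbf{Q}$ are themselves quadratic in $\tilde{\mathcal{W}}$), whereas the equivalent unrolled squared-norm form exposes the affine-inside-a-convex structure immediately. Using the $\mathbf{Q}$-form for the $\mathbf{u}_i$ block and the unrolled form for the $\tilde{\mathcal{W}}$ block is what makes all three cases fall out cleanly.
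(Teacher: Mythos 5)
Your proof is correct and is exactly the argument the paper intends: the paper states this proposition without an explicit proof, but it sets up Proposition \ref{lem:Tikhonov} (positive semidefiniteness of $\mathbf{Q}(\tilde{\mathcal{W}})$) precisely to supply convexity in the $\tilde{\mathcal{U}}$ block, and the $\mathbf{V}$ and $\tilde{\mathcal{W}}$ blocks follow from convex-loss-composed-with-affine-map and squared-norm-of-affine-map arguments, just as you write. Your observation that the unrolled form of Eq.~\ref{eqn:our_obj} is the right representation for the $\tilde{\mathcal{W}}$ block (since the entries of $\mathbf{Q}$ are quadratic in the weights) is the one genuinely non-trivial point, and you handle it correctly.
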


	\section{Block Coordinate Descent Algorithm}\label{sec:BCD}
    \subsection{Training}

    Eq. \ref{eqn:our_obj} can be minimized using alternating optimization, which decomposes the problem into the following three convex sub-problems based on Lemma \ref{lem:multi-convex}:

        \begin{itemize}
    \item Tikhonov regularized inverse problem: $\min_{\mathbf{u}_i\in\mathcal{U}} \ell(y_i, \mathbf{V}\mathbf{P}\mathbf{u}_{i}) + \frac{1}{2}\mathbf{u}_i^T\mathbf{Q}(\tilde{\mathcal{W}})\mathbf{u}_i, \forall i.$
    \item Least-square regression: $\min_{\forall\mathbf{W}_{n,m}\in\tilde{\mathcal{W}}} \frac{\gamma_n}{2}\sum_{i}\left\|\mathbf{u}_{i,n} - \sum_{m\in\mathcal{M}_n}\mathbf{W}_{n,m}\mathbf{u}_{i,m}\right\|_2^2$;
    \item Classification using learned features: $\min_{\mathbf{V}\in\mathcal{V}} \sum_i \ell(y_i, \mathbf{V}\mathbf{P}\mathbf{u}_{i})$.
    \end{itemize}
        All the three sub-problems can be solved efficiently due to their convexity. In fact the inverse sub-problem alleviates the vanishing gradient issue in traditional deep learning, because it tries to obtain the {\em estimated} solution for the output feature of each hidden layer, which are dependent on each other through the Tikhonov matrix. Such functionality is similar to that of target (\ie estimated outputs of each layer) propagation \cite{bengio2014auto}, namely, propagating information between input data and output labels.
        
        Unfortunately, a simple alternating optimization scheme cannot guarantee the convergence to stationary points for solving Eq. \ref{eqn:our_obj}.  Therefore we propose a novel BCD based algorithm for training DNNs based on Eq. \ref{eqn:our_obj} as listed in Alg. \ref{alg:bcd}. Basically we sequentially solve each sub-problem with an extra quadratic term. These extra terms as well as the convex combination rule guarantee the global convergence of the algorithm (see Sec. \ref{sec:convergence} for more details). 

Our algorithm involves solving a sequence of quadratic programs (QP), whose computational complexity is cubic, in general, in the input dimension \cite{nesterov1994interior}. %Meanwhile solving these QP problems as well as updating variables involve the access to previous solutions of the variables. Issues like these should be handled well in practice for efficient implementation. 
In this paper we focus on the theoretical development of the algorithm, and consider fast implementations in future work.% since it is beyond the scope of the paper.

\begin{algorithm}[t]\footnotesize
\SetAlgoLined
\SetKwInOut{Input}{Input}\SetKwInOut{Output}{Output}
\Input{training data $\{(\mathbf{x}_i, \mathbf{y}_i)\}$ and regularization parameters $\{\gamma_n\}$}
\Output{network weights $\tilde{\mathcal{W}}$}
\BlankLine
Randomly initialize $\tilde{\mathcal{U}}^{(0)}\subseteq\mathcal{U}, \mathbf{V}^{(0)}\in\mathcal{V}, \tilde{\mathcal{W}}^{(0)}\subseteq\mathcal{W}$;
        
Set sequence $\left\{\theta_t\right\}_{t=1}^{\infty}$ so that $0\leq\theta_t\leq 1, \forall t$ and sequence $\left\{\sum_{k=t}^{\infty}\frac{\theta_k}{1-\theta_k}\right\}_{t=1}^{\infty}$ converges to zero, \eg $\theta_t = \frac{1}{t^2}$;
		
\For{$t=1,2,\cdots$}{
$\mathbf{u}_i^* \leftarrow \arg\min_{\mathbf{u}_i\in\mathcal{U}} \ell(y_i, \mathbf{V}^{(t-1)}\mathbf{P}\mathbf{u}_{i}) + \frac{1}{2}\mathbf{u}_i^T\mathbf{Q}(\tilde{\mathcal{W}}^{(t-1)})\mathbf{u}_i + \frac{1}{2}(1-\theta_t)^2\|\mathbf{u}_{i} - \mathbf{u}_{i}^{(t-1)}\|_2^2, \forall i$;		

$\mathbf{u}_{i}^{(t)}\leftarrow\mathbf{u}_{i}^{(t-1)} + \theta_t(\mathbf{u}_i^* - \mathbf{u}_{i}^{(t-1)}), \forall i$;
			
$\mathbf{V}^*\leftarrow\arg\min_{\mathbf{V}\in\mathcal{V}} \sum_i \ell(y_i, \mathbf{V}\mathbf{P}\mathbf{u}_{i}^{(t)}) + \frac{1}{2}(1-\theta_t)^2\|\mathbf{V}-\mathbf{V}^{(t-1)}\|_F^2$;
            
$\mathbf{V}^{(t)}\leftarrow\mathbf{V}^{(t-1)} + \theta_t(\mathbf{V}^* - \mathbf{V}^{(t-1)})$;            

$\tilde{\mathcal{W}}^* \leftarrow \arg\min_{\tilde{\mathcal{W}}\subseteq\mathcal{W}} \sum_i\frac{1}{2}[\mathbf{u}_i^{(t)}]^T\mathbf{Q}(\tilde{\mathcal{W}})\mathbf{u}_i^{(t)} + \frac{1}{2}(1-\theta_t)^2\sum_n\sum_{m\in\mathcal{M}_n}\|\mathbf{W}_{n,m} - \mathbf{W}_{n,m}^{(t-1)}\|_F^2$
			
$\mathbf{W}_{n,m}^{(t)}\leftarrow\mathbf{W}_{n,m}^{(t-1)} + \theta_t(\mathbf{W}_{n,m}^* - \mathbf{W}_{n,m}^{(t-1)}), \forall n, \forall m\in\mathcal{M}_n, \mathbf{W}_{n,m}^*\in\tilde{\mathcal{W}}^*$;
		}
		\Return $\tilde{\mathcal{W}}$;
		\caption{Block Coordinate Descent (BCD) Algorithm for Training DNNs}\label{alg:bcd}
	\end{algorithm}

\subsection{Testing}\label{ssec:testing}
Given a test sample $\mathbf{x}$ and learned network weights $\tilde{\mathcal{W}}^*, \mathbf{V}^*$, based on Eq. \ref{eqn:our_obj} the ideal decision function for classification should be $y^*=\arg\min_{y\in\mathcal{Y}} \left\{ \min_{\mathbf{u}} f(\mathbf{u}, \mathbf{V}^*, \tilde{\mathcal{W}}^*) \right\}.$ This indicates that for each pair of test data and potential label we have to solve an optimization problem, leading to unaffordably high computational complexity that prevents us from using it. 
    
%     Instead we consider the objective $f$ in Eq. \ref{eqn:our_obj} as the sequential combination of two separate components: the loss function determines the classifier based on the features that are generated by solving the (constrained) quadratic term. This approximation minimizes the upper bound of $f(\mathbf{u}, \mathbf{V}^*, \tilde{\mathcal{W}}^*)$ while enforcing the same deep features for all the class labels. Obviously the learned classifier weights $\mathbf{V}^*$ cannot be used any more.
    
%     In summary, we define a novel {\em deep feature generator} based on the Tikhonov regularizer as follows:
%     \begin{align}\label{eqn:feat}
%     \mathbf{u}^* = \arg\min_{\mathbf{u}\in\mathcal{U}} \frac{1}{2}\mathbf{u}^T\mathbf{Q}\left(\tilde{\mathcal{W}}^*\right)\mathbf{u} = \arg\min_{\mathbf{u}\in\mathcal{U}}\sum_{n=1}^N\frac{\gamma_n}{2}\left\|\mathbf{u}_{\cdot,n}-\sum_{m\in\mathcal{M}_n}\mathbf{W}_{n,m}^*\mathbf{u}_{\cdot,m}\right\|_2^2.
%     \end{align}
    
Recall that our goal is to train feed-forward DNNs using the BCD algorithm in Alg. \ref{alg:bcd}. Considering this, we utilize the network weights $\tilde{\mathcal{W}}^*$ to construct the network for extracting deep features. Since these features are the approximation of $\tilde{\mathcal{U}}$ in Eq. \ref{eqn:our_obj} (in fact this is a feasible solution of an extreme case where $\gamma_n=+\infty, \forall n$), the learned classifier $\mathbf{V}^*$ can never be reused at test time. Therefore, we %extract the deep features from training data using the constructed network and retrain the classifier based on these features.
retain the architecture and weights of the trained network and replace the classification layer (\ie the last layer with weights $\mathbf{V}$) with a linear support vector machine (SVM).

 \subsection{Experiments}
 \subsubsection{MNIST Demonstration}

\begin{wrapfigure}{r}{0.35\linewidth} 
	\vspace{-18pt}
	\begin{center}
		\includegraphics[width=\linewidth]{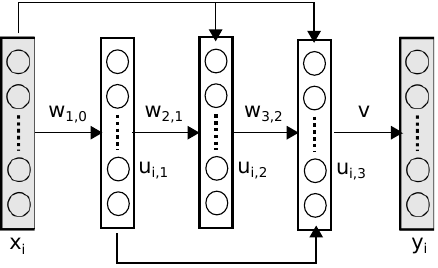}\vspace{-3mm}
		\caption{\footnotesize The network architecture for algorithm/solver comparison.}
		\label{fig:structure}
	\end{center}
	\vspace{-20pt}
	%  \vspace{1pt}
\end{wrapfigure}

To demonstrate the effectiveness and efficiency of our BCD based algorithm in Alg. \ref{alg:bcd}, we conduct comprehensive experiments on MNIST \cite{lecun1998mnist} dataset using its $28\times 28 = 784$ raw pixels as input features. We refer to our algorithm for learning {\em dense} networks as ``BCD'' and that for learning {\em sparse} networks as ``BCD-S'', respectively. For sparse learning, we define the convex set $\mathcal{W}=\{\mathbf{W} \mid \|\mathbf{W}_k\|_1\leq 1, \forall k\}$, where $\mathbf{W}_k$ denotes the $k$-th row in matrix $\mathbf{W}$ and $\|\cdot\|_1$ denotes the $\ell_1$ norm of a vector. All the comparisons are performed on the same PC. We implement our algorithms using MATLAB GPU implementation without optimizing the code.

% \subsection{Comparison with SGD based Solvers}

We compare our algorithms with the six SGD based solvers in Caffe \cite{jia2014caffe}, \ie SGD \cite{bottou2012stochastic}, AdaDelta \cite{zeiler2012adadelta}, AdaGrad \cite{duchi2011adaptive}, Adam \cite{kingma2014adam}, Nesterov \cite{sutskever2013importance}, RMSProp \cite{tieleman2012lecture}, which are coded in Python. The network architecture that we implemented is illustrated in Fig.~\ref{fig:structure}. This network has three hidden layers (with ReLU) with 784 nodes per layer, four FC layers, and three skip layers inside. %As we see for the third hidden layer, there are three inputs from previous layers, which can be formulated by our multi-input ReLU. 
Therefore, the mapping function from input $\mathbf{x}_i$ to output $\mathbf{y}_i$ defined by the network is:
\begin{align}
& f(\mathbf{x}_i) = \mathbf{V}\mathbf{u}_{i,3}, \, \mathbf{u}_{i,3} = \max\{\mathbf{0}, \mathbf{x}_i+\mathbf{u}_{i,1}+\mathbf{W}_{3,2}\mathbf{u}_{i,2}\}, \nonumber \\
& \mathbf{u}_{i,2}=\max\{\mathbf{0}, \mathbf{x}_i+\mathbf{W}_{2,1}\mathbf{u}_{i,1}\}, \, \mathbf{u}_{i,1}=\max\{\mathbf{0}, \mathbf{W}_{1,0}\mathbf{x}_i\}. \nonumber
\end{align}
For simplicity without loss of generality, we utilize MSE as the loss function, and learn the network parameters using different solvers with the same inputs and random initial weights for each FC layer. 

\begin{figure}
\centering
    \begin{minipage}[b]{0.49\linewidth}
    \centering
        \includegraphics[width=\textwidth]{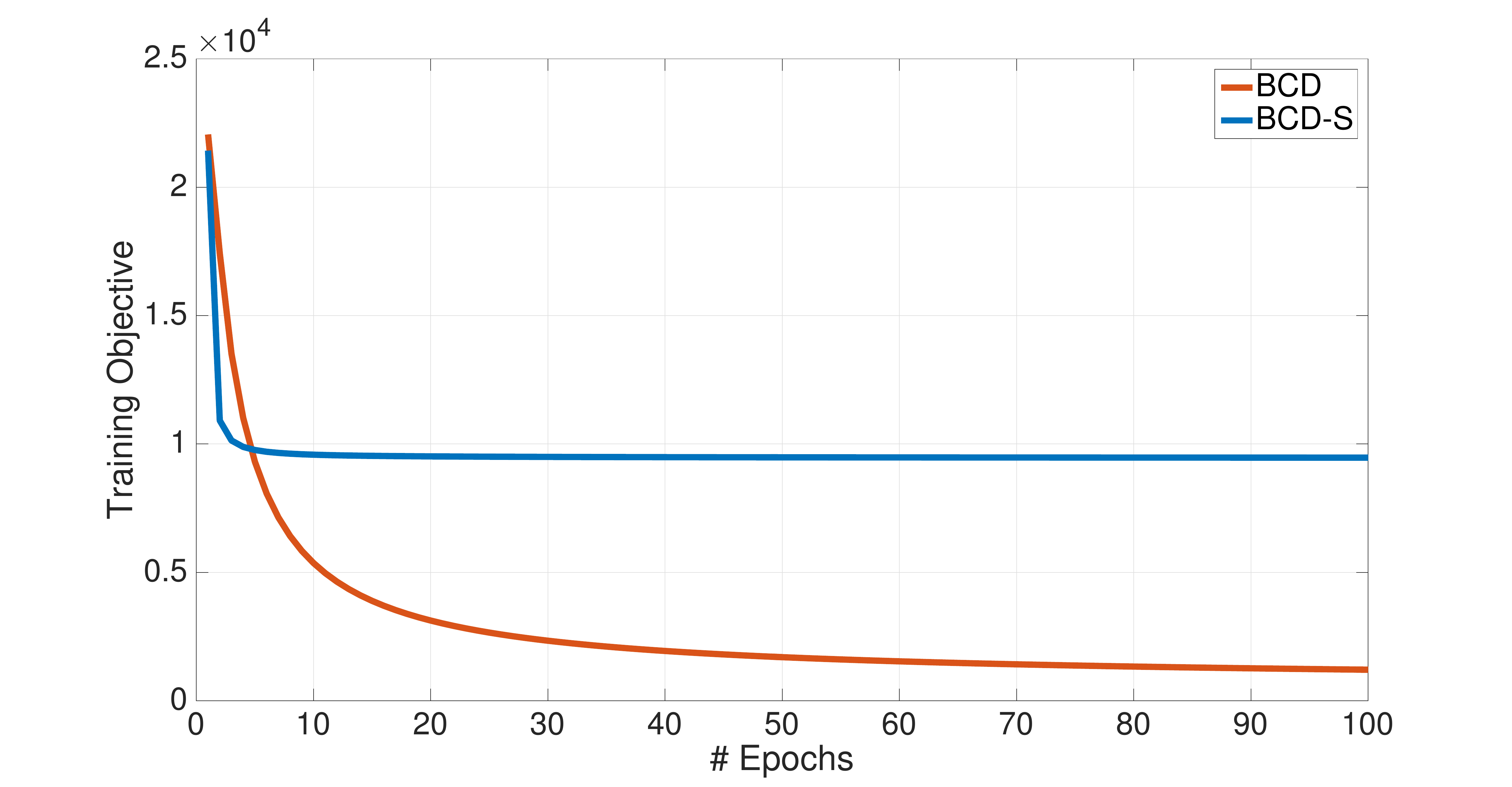}
        \centerline{\footnotesize (a)}
    \end{minipage}
    ~ %add desired spacing between images, e. g. ~, \quad, \qquad, \hfill etc. 
      %(or a blank line to force the subfigure onto a new line)
    \begin{minipage}[b]{0.49\linewidth}
    \centering
        \includegraphics[width=\textwidth]{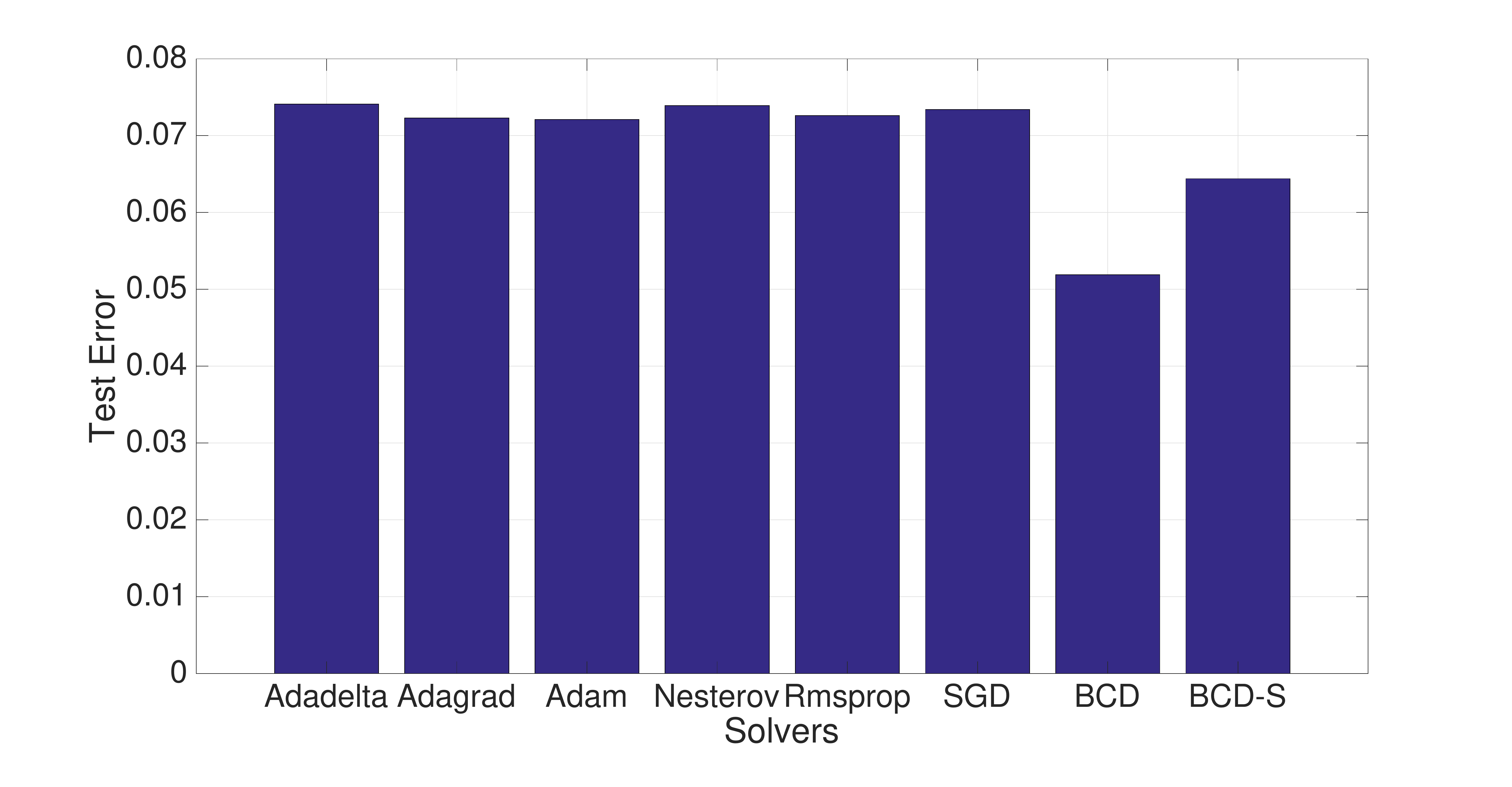}
        \centerline{\footnotesize (b)}
    \end{minipage}  
    \begin{minipage}[b]{0.49\linewidth}
    \centering
        \includegraphics[width=\textwidth]{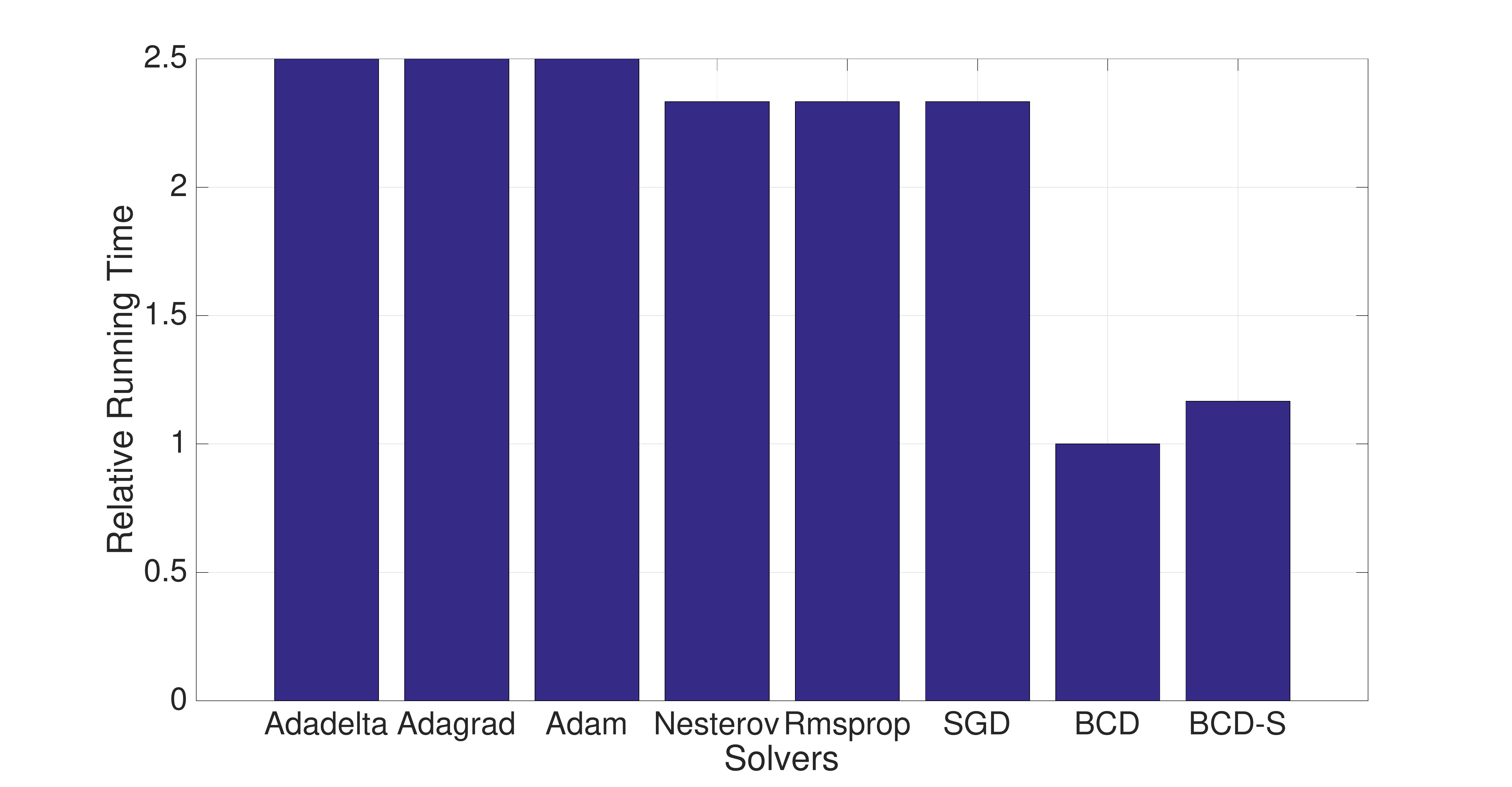}
        \centerline{\footnotesize (c)}
    \end{minipage}
    ~ %add desired spacing between images, e. g. ~, \quad, \qquad, \hfill etc. 
      %(or a blank line to force the subfigure onto a new line)
    \begin{minipage}[b]{0.49\linewidth}
    \centering
        \includegraphics[width=\textwidth]{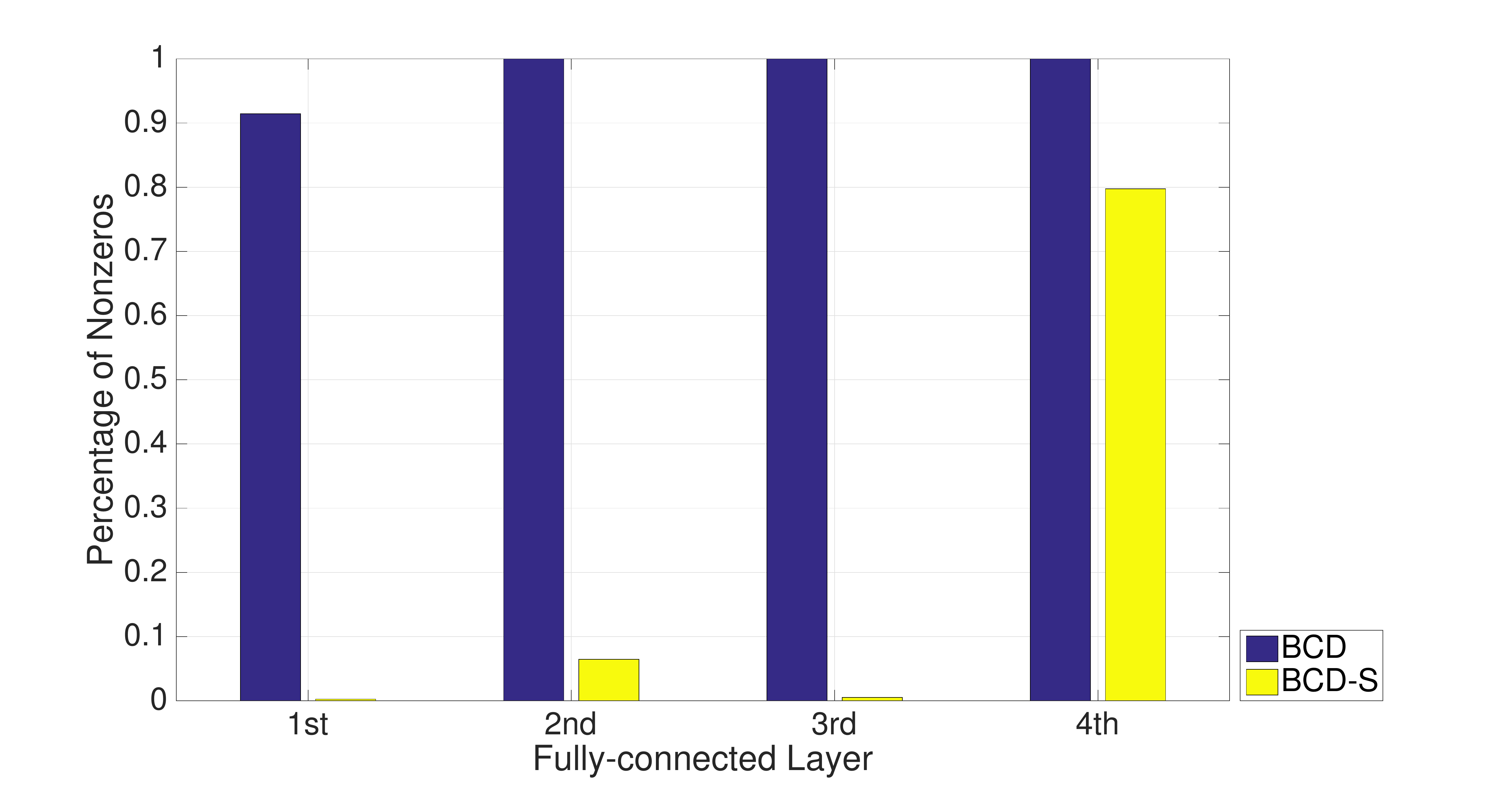}
        \centerline{\footnotesize (d)}
    \end{minipage}  
    \vspace{-5mm}
    \caption{\footnotesize {\bf (a)} Illustration of convergence for BCD and BCD-S. {\bf (b)} Test error comparison. {\bf (c)} Running time comparison. {\bf (d)} Sparseness comparison for BCD and BCD-S.}\label{fig:error}
    \vspace{-4mm}
\end{figure} 

Without fine-tuning the regularization parameters, we simply set $\gamma_n=0.1, \forall n$ in Eq. \ref{eqn:our_obj} for both BCD and BCD-S algorithms. For the Caffe solvers, we modify the demo code in Caffe for MNIST and run the comparison with carefully tuning the parameters %related to each learning component such as learning rate, weight decay, dropout, and batch normalization, 
to achieve the best performance that we can. We report the results within 100 epochs by averaging three trials, because at this point the training of all the methods seems convergent already. For all competing algorithms, in each epoch the entire training data is passed through once to update parameters. Therefore, for our algorithms each epoch is equivalent to one iteration, and there are 100 iterations in total.

{\bf Convergence:} Fig. \ref{fig:error}(a) shows the change of training objective with increase of epochs for BCD and BCD-S, respectively. As we see both curves decrease monotonically and become flatter and flatter eventually, indicating that both algorithms converge. BCD-S converges much faster than BCD, but its objective is higher than BCD. This is because BCD-S learns sparse models that may not fit data as well as dense models learned by BCD.

% \begin{wrapfigure}{r}{0.43\linewidth} 
% 	\vspace{-20pt}
% 	\begin{center}
% 		\includegraphics[width=\linewidth]{obj.eps}\vspace{-5mm}
% 		\caption{\footnotesize Comparison on convergence using the change of network weights for each algorithm.}
% 		\label{fig:convergence}
% 	\end{center}
% 	%\vspace{-10pt}
% 	%  \vspace{1pt}
% \end{wrapfigure}

% As we see both of our algorithms tend to converge with the increase of number of epochs as their curves tend to approach to zero with faster speed than the SGD based solvers. At the beginning our algorithms update the weights significantly faster than all the competitors in Caffe with one order of magnitude, roughly speaking. BCD-S seems to converge faster than BCD where much fewer nonzeros need to be updated. The theoretical results on convergence of Alg. \ref{alg:bcd} are shown in Sec. \ref{sec:convergence} later. In contrast, the SGD based solvers seem to converge very slowly (or even not converge at all) because the curves look flat after 50 epochs.

{\bf Testing Error:} As mentioned in Sec. \ref{ssec:testing}, here we utilize linear SVMs and last-layer hidden features extracted from training data to retrain the classifier. Based on the network in Fig. \ref{fig:structure} the feature extraction function is $\mathbf{u}_{i,3}=\max\{\mathbf{0}, \mathbf{x}_i+\max\{\mathbf{0}, \mathbf{W}_{1,0}\mathbf{x}_i\}+\mathbf{W}_{3,2}\max\{\mathbf{0}, \mathbf{x}_i+\mathbf{W}_{2,1}\max\{\mathbf{0}, \mathbf{W}_{1,0}\mathbf{x}_i\}\}\}$. 
% \begin{wrapfigure}{r}{0.43\linewidth} 
% 	\vspace{-20pt}
% 	\begin{center}
% 		\includegraphics[width=\linewidth]{test_err.eps}\vspace{-5mm}
% 		\caption{\footnotesize Comparison on convergence using the change of network weights for each algorithm.}
% 		\label{fig:convergence}
% 	\end{center}
% 	%\vspace{-10pt}
% 	%  \vspace{1pt}
% \end{wrapfigure}
To conduct fair comparison, we retrain the classifiers for all the algorithms, and summarize the test-time results in Fig.~\ref{fig:error}(b) with 100 epochs. Our BCD algorithm which learns dense architectures, same as the SGD based solvers, performs best, while our BCD-S algorithm works still better than the SGD competitors, although it learns much sparser networks. These results are consistent with the training objectives in Fig. \ref{fig:error}(a) as well.

{\bf Computational Time:} We compare the training time in Fig. \ref{fig:error}(c). It seems that our BCD implementation is significantly faster than the Caffe solvers. For instance, our BCD achieves about 2.5 times speed-up than the competitors, while achieving best classification performance at test time.

{\bf Sparseness:} In order to compare the difference in terms of weights between the dense and sparse networks learned by BCD and BCD-S, respectively, we compare the percentage of nonzero weights in each FC layer, and show the results in Fig. \ref{fig:error}(d). As we see, expect the last FC layer (corresponding to parameter $\mathbf{V}$ as classifiers) BCD-S has the ability of learning much sparser networks for deep feature extraction. In our case BCD-S learns a network with $2.42\%$ nonzero weights\footnote{Since we will retrain the classifiers after all, here we do not take the nonzeros in the last FC into account.}, on average, with classification accuracy $1.34\%$ lower than that of BCD which learns a network with $97.15\%$ nonzero weights. Potentially this ability could be very useful in the scenarios such as embedding systems where sparse networks are desired.

\subsubsection{Supervised Hashing}
To further demonstrate the usage of our approach, we compare with \cite{Zhang_2016_CVPR}\footnote{MATLAB code is available at \url{https://zimingzhang.wordpress.com/publications/}.} for the application of supervised hashing, which is the state-of-the-art in the literature. \cite{Zhang_2016_CVPR} proposed an ADMM based optimization algorithm to train DNNs with relaxed objective that is very related to ours. We train the same DNN on MNIST as used in \cite{Zhang_2016_CVPR}, \ie with 48 hidden layers and 256 nodes per layer that are sequentially and fully connected (see \cite{Zhang_2016_CVPR} for more details on the network). Using the same image features, we consistently observe marginal improvement over the results (\ie precision, recall, mAP) reported in \cite{Zhang_2016_CVPR}. However, on the same PC we can finish training within 1 hour based on our implementation, while using the MATLAB code for \cite{Zhang_2016_CVPR} the training needs about 9 hours. Similar observations can be made on CIFAR-10 as used in \cite{Zhang_2016_CVPR} using a network with 16 hidden layers and 1024 nodes per layer.

\section{Convergence Analysis}\label{sec:convergence}

\subsection{Preliminaries}\label{ssec:preliminaries}

\begin{defi}[Lipschitz Continuity \cite{erikssonapplied}]\label{defi:Lipschitz}
We say that function $f$ is {\em Lipschitz continuous} with Lipschitz constant $L_f$ on $\mathcal{X}$, if there is a (necessarily nonnegative) constant $L_f$ such that
$$|f(x_1)-f(x_2)|\leq L_f|x_1-x_2|, \forall x_1 , x_2 \in \mathcal{X}.$$
\end{defi}

\begin{defi}[Global Convergence \cite{lanckriet2009convergence}]\label{def:gl_conv}
Let $\mathcal{X}$ be a set and $x_0\in\mathcal{X}$ a given point, Then an Algorithm, $\mathcal{A}$, with initial point $x_0$ is a point-to-set map $\mathcal{A}: \mathcal{X}\rightarrow\mathcal{P}(\mathcal{X})$ which generates a sequence $\{x_k\}_{k=1}^{\infty}$ via the rule $x_{k+1}\in \mathcal{A}(x_k), k=0,1,\cdots$. $\mathcal{A}$ is said to be {\em global convergent} if for any chosen initial point $x_0$, the sequence $\{x_k\}_{k=0}^{\infty}$ generated by $x_{k+1}\in \mathcal{A}(x_k)$ (or a subsequence) converges to a point for which a necessary condition of optimality holds.
\end{defi}
    
\begin{defi}[R-linear Convergence Rate \cite{nocedal99}]\label{defi:rate}
Let $\{x_k\}$ be a sequence in $\mathbb{R}^n$ that converges to $x^*$. We say that convergence is {\em R-linear} if there is a sequence of nonnegative scalars $\{v_k\}$ such that $\|x_k-x^*\|\leq v_k, \forall k$, and $\{v_k\}$ converges Q-linearly to zero.
\end{defi}    

\begin{lemma}[3-Point Property \cite{Baldassarre}]\label{lem:3-point}
If function $\phi(\mathbf{w})$ is convex and $\hat{\mathbf{w}}=\arg\min_{\mathbf{w}\in\mathbb{R}^d}\phi(\mathbf{w})+\frac{1}{2}\|\mathbf{w}-\mathbf{w}_0\|_2^2$, then for any $\mathbf{w}\in\mathbb{R}^d$,
$$
\phi(\hat{\mathbf{w}})+\frac{1}{2}\|\hat{\mathbf{w}}-\mathbf{w}_0\|_2^2\leq\phi(\mathbf{w})+\frac{1}{2}\|\mathbf{w}-\mathbf{w}_0\|_2^2-\frac{1}{2}\|\mathbf{w}-\hat{\mathbf{w}}\|_2^2.
$$
\end{lemma}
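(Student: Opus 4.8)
The plan is to recognize the claimed inequality as the standard strong-convexity (equivalently, ``prox'') inequality. Let $g(\mathbf{w}) = \phi(\mathbf{w}) + \frac{1}{2}\|\mathbf{w}-\mathbf{w}_0\|_2^2$ denote the objective being minimized. Since $\phi$ is convex and $\frac{1}{2}\|\mathbf{w}-\mathbf{w}_0\|_2^2$ is $1$-strongly convex, $g$ is $1$-strongly convex, and for any $1$-strongly convex function minimized at $\hat{\mathbf{w}}$ one has $g(\mathbf{w}) \geq g(\hat{\mathbf{w}}) + \frac{1}{2}\|\mathbf{w}-\hat{\mathbf{w}}\|_2^2$ for all $\mathbf{w}$; expanding $g$ on both sides and rearranging yields exactly the stated bound. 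So the whole lemma reduces to this one inequality, which I would prove directly rather than invoke as a black box.

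To prove it from first principles, first I would write the first-order optimality condition at $\hat{\mathbf{w}}$. Since $\hat{\mathbf{w}}$ minimizes $g$ over all of $\mathbb{R}^d$, its subdifferential contains zero there, i.e. $\mathbf{0} \in \partial\phi(\hat{\mathbf{w}}) + (\hat{\mathbf{w}} - \mathbf{w}_0)$, so there is a subgradient $\mathbf{s} \in \partial\phi(\hat{\mathbf{w}})$ with $\mathbf{s} = \mathbf{w}_0 - \hat{\mathbf{w}}$. Second, I would apply the subgradient inequality (the first-order characterization of convexity) for $\phi$ at $\hat{\mathbf{w}}$ using this particular subgradient: for any $\mathbf{w}$, $\phi(\mathbf{w}) \geq \phi(\hat{\mathbf{w}}) + \langle \mathbf{w}_0 - \hat{\mathbf{w}}, \mathbf{w} - \hat{\mathbf{w}}\rangle$, equivalently $\phi(\hat{\mathbf{w}}) - \phi(\mathbf{w}) \leq \langle \hat{\mathbf{w}} - \mathbf{w}_0, \mathbf{w} - \hat{\mathbf{w}}\rangle$.

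Third comes a routine polarization identity for the quadratic terms: a direct expansion (using $\frac{1}{2}\|a\|_2^2 - \frac{1}{2}\|b\|_2^2 = \frac{1}{2}\langle a-b, a+b\rangle$ with $a=\mathbf{w}-\mathbf{w}_0$, $b=\hat{\mathbf{w}}-\mathbf{w}_0$) shows that $\frac{1}{2}\|\mathbf{w}-\mathbf{w}_0\|_2^2 - \frac{1}{2}\|\hat{\mathbf{w}}-\mathbf{w}_0\|_2^2 - \frac{1}{2}\|\mathbf{w}-\hat{\mathbf{w}}\|_2^2 = \langle \hat{\mathbf{w}} - \mathbf{w}_0, \mathbf{w} - \hat{\mathbf{w}}\rangle$. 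Combining this identity with the convexity inequality from the previous step bounds $\phi(\hat{\mathbf{w}}) - \phi(\mathbf{w})$ above by precisely $\frac{1}{2}\|\mathbf{w}-\mathbf{w}_0\|_2^2 - \frac{1}{2}\|\hat{\mathbf{w}}-\mathbf{w}_0\|_2^2 - \frac{1}{2}\|\mathbf{w}-\hat{\mathbf{w}}\|_2^2$, and transposing the terms recovers the claim.

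The argument is essentially mechanical, so there is no deep obstacle; the only point requiring care is the first-order optimality step when $\phi$ is nonsmooth, where I must argue through $\mathbf{0} \in \partial g(\hat{\mathbf{w}})$ and the sum rule $\partial g(\hat{\mathbf{w}}) = \partial\phi(\hat{\mathbf{w}}) + (\hat{\mathbf{w}}-\mathbf{w}_0)$ rather than setting a gradient to zero. Since $\frac{1}{2}\|\cdot - \mathbf{w}_0\|_2^2$ is finite-valued and differentiable everywhere, the subdifferential sum rule applies without any constraint qualification, so this step is safe. Everything else is the polarization identity plus a single use of convexity.
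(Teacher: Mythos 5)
Your proof is correct and complete. The paper itself offers no proof of this lemma---it is imported wholesale by citation---so there is nothing to compare against; your argument (zero in the subdifferential of $\phi(\cdot)+\frac{1}{2}\|\cdot-\mathbf{w}_0\|_2^2$ at $\hat{\mathbf{w}}$, hence a subgradient $\mathbf{w}_0-\hat{\mathbf{w}}\in\partial\phi(\hat{\mathbf{w}})$, the subgradient inequality at $\hat{\mathbf{w}}$, and the polarization identity $\frac{1}{2}\|\mathbf{w}-\mathbf{w}_0\|_2^2-\frac{1}{2}\|\hat{\mathbf{w}}-\mathbf{w}_0\|_2^2-\frac{1}{2}\|\mathbf{w}-\hat{\mathbf{w}}\|_2^2=\langle\hat{\mathbf{w}}-\mathbf{w}_0,\mathbf{w}-\hat{\mathbf{w}}\rangle$) is exactly the standard derivation, and your care with the nonsmooth case via the subdifferential sum rule is appropriate since the quadratic term is finite-valued and differentiable everywhere.
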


\subsection{Theoretical Results}

\begin{defi}[Assumptions on $f$ in Eq. \ref{eqn:our_obj}]\label{defi:assumption}
Let $f_1(\tilde{\mathcal{U}})\stackrel{\Delta}{=}f(\tilde{\mathcal{U}}, \cdot, \cdot), f_2(\mathbf{V})\stackrel{\Delta}{=}f(\cdot, \mathbf{V}, \cdot), f_3(\tilde{\mathcal{W}})\stackrel{\Delta}{=}f(\cdot, \cdot, \tilde{\mathcal{W}})$ be the objectives of the three sub-problems, respectively. Then we assume that $f$ is lower-bounded and $f_1, f_2, f_3$ are Lipschitz continuous with constants $L_{f_1}, L_{f_2}, L_{f_3}$, respectively.
% \begin{itemize}
% \item $f_1, f_2, f_3$ are Lipschitz continuous with constants $L_{f_1}, L_{f_2}, L_{f_3}$, respectively;
% \item The gradients of $f_1, f_2, f_3$, \ie $\nabla f_1, \nabla f_2, \nabla f_3$, are Lipschitz continuous with constants $L_{\nabla f_1}, L_{\nabla f_2}, L_{\nabla f_3}$, respectively.
% \end{itemize}
\end{defi}

\begin{prop}\label{prop:1}
Let $x,y,\hat{x}\in\mathcal{X}$ and $y=(1-\theta)x+\theta\hat{x}$. Then $\frac{1}{2}\|\hat{x}-y\|_2^2 = \frac{1}{2}\left(1-\theta\right)^2\|\hat{x}-x\|_2^2.$
\end{prop}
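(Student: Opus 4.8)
The plan is to prove this by a direct algebraic computation, since the statement is a pure identity about the squared Euclidean norm of an affine combination. The whole proof hinges on simplifying the difference $\hat{x}-y$ before taking norms.

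First I would substitute the definition $y=(1-\theta)x+\theta\hat{x}$ into $\hat{x}-y$ and collect terms. This yields
\begin{align}
\hat{x}-y = \hat{x} - (1-\theta)x - \theta\hat{x} = (1-\theta)\hat{x} - (1-\theta)x = (1-\theta)(\hat{x}-x),
\end{align}
so the coefficient of $\hat{x}$ is $1-\theta$ and the vector $\hat{x}-x$ factors out cleanly. The key observation is simply that $1-\theta$ is the common scalar multiplying both $\hat{x}$ and $x$.

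Next I would take the squared $\ell_2$ norm of both sides, using the homogeneity property $\|c\mathbf{v}\|_2^2 = c^2\|\mathbf{v}\|_2^2$ for a scalar $c\in\mathbb{R}$ and vector $\mathbf{v}$. With $c=1-\theta$ this gives $\|\hat{x}-y\|_2^2 = (1-\theta)^2\|\hat{x}-x\|_2^2$, and multiplying through by $\tfrac{1}{2}$ produces exactly the claimed equality.

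I do not expect any genuine obstacle here: the result is an exact identity requiring no convexity of $\mathcal{X}$, no assumption on $\theta$ beyond it being a real scalar, and no inequality estimates. The only point meriting a moment's care is the factoring step above, where one must notice that both surviving terms share the factor $1-\theta$; once that is seen, the norm homogeneity finishes the argument immediately.
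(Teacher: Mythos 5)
Your computation is correct: $\hat{x}-y=(1-\theta)(\hat{x}-x)$ follows immediately from substituting the definition of $y$, and homogeneity of the squared norm gives the identity. The paper states this proposition without proof precisely because it is this one-line algebraic fact, so your argument is exactly the intended one.
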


\begin{lemma}\label{lem:phi}
Let $\mathcal{X}$ be a nonempty closed convex set, function $\phi:\mathcal{X}\rightarrow \mathbb{R}$ is convex and Lipschitz continuous with constant $L$, and scalar $0\leq\theta\leq 1$. Suppose that $\forall x\in\mathcal{X}, \hat{x}=\arg\min_{z\in\mathcal{X}}\phi(z)+\frac{1}{2}\|z-z_0\|_2^2$ and $z_0=y=(1-\theta)x+\theta\hat{x}$. Then we have
$$\frac{1-\theta}{\theta}\|y-x\|_2^2 \leq \phi(x)-\phi(y) \leq L\|y-x\|_2 \Rightarrow \|y-x\|_2\leq \frac{L\theta}{1-\theta}.$$
\end{lemma}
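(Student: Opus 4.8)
The plan is to establish the two displayed inequalities separately and then read off the implication. The right-hand inequality $\phi(x)-\phi(y)\le L\|y-x\|_2$ is immediate from the hypothesis that $\phi$ is Lipschitz continuous with constant $L$, since $\phi(x)-\phi(y)\le|\phi(x)-\phi(y)|\le L\|x-y\|_2$. The left-hand inequality is where the real work lies, and I would obtain it by feeding the optimality of $\hat{x}$ into the 3-Point Property (Lemma \ref{lem:3-point}) and then invoking convexity of $\phi$.

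Concretely, first I would apply Lemma \ref{lem:3-point} with $\mathbf{w}_0=z_0=y$, minimizer $\hat{\mathbf{w}}=\hat{x}$, and test point $\mathbf{w}=x$, yielding
$$\phi(\hat{x})+\tfrac{1}{2}\|\hat{x}-y\|_2^2\le\phi(x)+\tfrac{1}{2}\|x-y\|_2^2-\tfrac{1}{2}\|x-\hat{x}\|_2^2.$$
Next I would exploit the geometry of the convex combination $y=(1-\theta)x+\theta\hat{x}$: a direct computation gives $\hat{x}-y=(1-\theta)(\hat{x}-x)$ and $x-y=\theta(x-\hat{x})$, so that (using Proposition \ref{prop:1}) $\|\hat{x}-y\|_2^2=(1-\theta)^2\|\hat{x}-x\|_2^2$ and $\|x-y\|_2^2=\theta^2\|\hat{x}-x\|_2^2$. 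Substituting these into the 3-Point inequality and collecting the coefficient of $\|\hat{x}-x\|_2^2$ (the bracket simplifies as $\theta^2-1-(1-\theta)^2=2(\theta-1)$) reduces everything to the clean estimate $\phi(x)-\phi(\hat{x})\ge(1-\theta)\|\hat{x}-x\|_2^2$.

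The key bridge step is to convert this bound on $\phi(x)-\phi(\hat{x})$ into one on $\phi(x)-\phi(y)$, which is what the lemma actually asks for. Here I would use convexity of $\phi$ at $y=(1-\theta)x+\theta\hat{x}$, giving $\phi(y)\le(1-\theta)\phi(x)+\theta\phi(\hat{x})$ and hence $\phi(x)-\phi(y)\ge\theta\bigl(\phi(x)-\phi(\hat{x})\bigr)$. Combining this with the previous estimate and rewriting $\|\hat{x}-x\|_2^2=\theta^{-2}\|y-x\|_2^2$ (since $\|y-x\|_2=\theta\|\hat{x}-x\|_2$) produces exactly $\phi(x)-\phi(y)\ge\frac{1-\theta}{\theta}\|y-x\|_2^2$. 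Chaining the two inequalities then gives $\frac{1-\theta}{\theta}\|y-x\|_2^2\le L\|y-x\|_2$; dividing through by $\|y-x\|_2$ (the case $\|y-x\|_2=0$ being trivial, as the claimed bound is nonnegative) yields the implication $\|y-x\|_2\le\frac{L\theta}{1-\theta}$.

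I expect the main obstacle to be the bridge step rather than the algebra: the 3-Point Property naturally delivers control of $\phi$ at the minimizer $\hat{x}$, whereas the lemma is phrased in terms of the averaged iterate $y$, so the argument hinges on recognizing that convexity supplies precisely the factor $\theta$ needed to transfer the descent estimate from $\hat{x}$ to $y$. Some care is also warranted at the degenerate endpoints ($\theta=0$ or $\theta=1$, and $x=\hat{x}$), where the stated fractions are read in the obvious limiting sense; I would treat $0<\theta<1$ as the generic case and dispatch the boundary cases separately.
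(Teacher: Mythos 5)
Your proposal is correct and follows essentially the same route as the paper's proof: both combine the 3-Point Property (Lemma \ref{lem:3-point}) at $z_0=y$ with the identities $\|\hat{x}-y\|_2^2=(1-\theta)^2\|\hat{x}-x\|_2^2$ and $\|x-y\|_2^2=\theta^2\|\hat{x}-x\|_2^2$, then use convexity of $\phi$ at the combination $y=(1-\theta)x+\theta\hat{x}$ to transfer the bound from $\hat{x}$ to $y$, and finish with Lipschitz continuity. The only difference is the order of the two main steps (the paper applies convexity first, then the 3-Point Property), plus your explicit attention to the degenerate cases $\theta\in\{0,1\}$ and $x=\hat{x}$, which the paper treats only in passing.
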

\begin{proof}
Based on the convexity of $\phi$, Prop. \ref{prop:1}, and Lemma \ref{lem:3-point}, we have
\begin{align}
& \phi(x)-\phi(y) \geq\phi(x)-\left[\left(1-\theta\right)\phi(x)+\theta\phi(\hat{x})\right]=\theta\left[\phi(x) - \phi(\hat{x})\right] \nonumber \\
&\geq \theta\left[\frac{1}{2}\|x-\hat{x}\|_2^2 + \frac{1}{2}\|\hat{x} - z_0\|_2^2 - \frac{1}{2}\|x-z_0\|_2^2\right] = \theta\left(1-\theta\right)\|x-\hat{x}\|_2^2 = \frac{1-\theta}{\theta}\|y-x\|_2^2, \nonumber 
\end{align}
where $\|y-x\|_2^2=0$ if and only if $\hat{x}=x$ (equivalently $\phi(x)=\phi(y)$); otherwise $\|y-x\|_2^2$ is lower-bounded from 0 provided that $\theta\neq 1$.

Based on Def. \ref{defi:Lipschitz}, we have $\phi(x)-\phi(y) \leq L\|y-x\|_2$.
\end{proof}

% \begin{cor}\label{cor:f}
% Given $\left(\tilde{\mathcal{U}}^{(t)}, \mathbf{V}^{(t)}, \tilde{\mathcal{W}}^{(t)}\right)$ and $\left(\tilde{\mathcal{U}}^{(t+1)}, \mathbf{V}^{(t+1)}, \tilde{\mathcal{W}}^{(t+1)}\right)$ generated by Alg. \ref{alg:bcd}, then
% $$
% g_{t,t+1} = \sum_{\mathbf{u}\in\tilde{\mathcal{U}}}\|\mathbf{u}^{(t)}-\mathbf{u}^{(t+1)}\|_2^2 + \|\mathbf{V}^{(t)}-\mathbf{V}^{(t+1)}\|_F^2 + \sum_{\mathbf{W}\in\tilde{\mathcal{W}}}\|\mathbf{W}^{(t)}-\mathbf{W}^{(t+1)}\|_F^2 \leq O\left(\frac{\theta_t^2}{1-\theta_t}\right). $$
% \end{cor}
% \begin{proof}
% Since $f\left(\tilde{\mathcal{U}}^{(t-1)}, \mathbf{V}^{(t-1)}, \tilde{\mathcal{W}}^{(t-1)}\right) - f\left(\tilde{\mathcal{U}}^{(t)}, \mathbf{V}^{(t)}, \tilde{\mathcal{W}}^{(t)}\right)=f\left(\tilde{\mathcal{U}}^{(t-1)}, \mathbf{V}^{(t-1)}, \tilde{\mathcal{W}}^{(t-1)}\right) - f\left(\tilde{\mathcal{U}}^{(t)}, \mathbf{V}^{(t-1)}, \tilde{\mathcal{W}}^{(t-1)}\right) + f\left(\tilde{\mathcal{U}}^{(t)}, \mathbf{V}^{(t-1)}, \tilde{\mathcal{W}}^{(t-1)}\right) - f\left(\tilde{\mathcal{U}}^{(t)}, \mathbf{V}^{(t)}, \tilde{\mathcal{W}}^{(t-1)}\right) + f\left(\tilde{\mathcal{U}}^{(t)}, \mathbf{V}^{(t)}, \tilde{\mathcal{W}}^{(t-1)}\right) - f\left(\tilde{\mathcal{U}}^{(t)}, \mathbf{V}^{(t)}, \tilde{\mathcal{W}}^{(t)}\right)$, based on Lemma~\ref{lem:phi}, we can easily prove this corollary.
% \end{proof}

\begin{thm}\label{thm:BCD}
Let $\left\{\left(\tilde{\mathcal{U}}^{(t)}, \mathbf{V}^{(t)}, \tilde{\mathcal{W}}^{(t)}\right)\right\}_{t=1}^{\infty}\subseteq\mathcal{U}\times\mathcal{V}\times\mathcal{W}$ be an arbitrary sequence from a closed convex set that is generated by Alg. \ref{alg:bcd}. Suppose that $0\leq\theta_t\leq 1, \forall t$ and the sequence $\left\{\sum_{k=t}^{\infty}\frac{\theta_k}{1-\theta_k}\right\}_{t=1}^{\infty}$ converges to zero. Then we have
\begin{enumerate}
\item $\left(\tilde{\mathcal{U}}^{(\infty)}, \mathbf{V}^{(\infty)}, \tilde{\mathcal{W}}^{(\infty)}\right)$ is a stationary point;
\item $\left\{\left(\tilde{\mathcal{U}}^{(t)}, \mathbf{V}^{(t)}, \tilde{\mathcal{W}}^{(t)}\right)\right\}_{t=1}^{\infty}$ will converge to $\left(\tilde{\mathcal{U}}^{(\infty)}, \mathbf{V}^{(\infty)}, \tilde{\mathcal{W}}^{(\infty)}\right)$ globally with R-linear convergence rate.
% 			\item Our objective $f(\tilde{\mathcal{U}}, \mathbf{V}, \tilde{\mathcal{W}})$ in Eq. \ref{eqn:our_obj} can monotonically decrease with rate $O\left(\frac{1}{t^2}\right)$, % provided that $\mathcal{W}, \mathcal{Z}$ are both closed convex sets, 
% 			namely,
% 			$$
% 			0 \leq f(\tilde{\mathcal{U}}^{(t-1)}, \mathbf{V}^{(t-1)}, \tilde{\mathcal{W}}^{(t-1)}) - f(\tilde{\mathcal{U}}^{(t)}, \mathbf{V}^{(t)}, \tilde{\mathcal{W}}^{(t)}) \leq O\left(\frac{1}{t^2}\right), \; \forall t.
% 			$$
\end{enumerate}
\end{thm}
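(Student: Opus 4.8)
The plan is to treat the three block updates in Alg. \ref{alg:bcd} uniformly through Lemma \ref{lem:phi}, applying it once per block with $\phi$ taken to be $f_1$, $f_2$, $f_3$ from Def. \ref{defi:assumption} (each convex by Prop. \ref{lem:multi-convex} and Lipschitz by assumption), with $x$ the previous iterate, $\hat{x}$ the sub-problem minimizer, $\theta=\theta_t$, and $y$ the convex-combination update. Two consequences of that lemma will drive everything: the lower bound $\frac{1-\theta_t}{\theta_t}\|y-x\|_2^2\le\phi(x)-\phi(y)$, which makes each block update a descent step for $f$, and the upper bound $\|y-x\|_2\le\frac{L\theta_t}{1-\theta_t}$, which controls how far consecutive iterates can move.

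First I would establish monotone descent: chaining the three block updates within iteration $t$ and using the lower bound above, $f$ is non-increasing along $\{(\tilde{\mathcal{U}}^{(t)},\mathbf{V}^{(t)},\tilde{\mathcal{W}}^{(t)})\}$, and since $f$ is lower bounded (Def. \ref{defi:assumption}) the objective values converge. For convergence of the sequence itself I would invoke the displacement bound: for any block and any $t'>t$, a triangle-inequality telescoping gives $\|x^{(t')}-x^{(t)}\|\le L\sum_{k=t+1}^{t'}\frac{\theta_k}{1-\theta_k}\le L\sum_{k=t+1}^{\infty}\frac{\theta_k}{1-\theta_k}$. The hypothesis that $\{\sum_{k\ge t}\frac{\theta_k}{1-\theta_k}\}$ converges to zero then makes each block sequence Cauchy, hence convergent, and this holds for every initialization, yielding global convergence in the sense of Def. \ref{def:gl_conv}. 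The same tail sum, renamed $v_t:=L\sum_{k\ge t+1}\frac{\theta_k}{1-\theta_k}$, is the natural bounding sequence for Def. \ref{defi:rate}, so R-linear convergence follows once $\{v_t\}$ is shown to decay Q-linearly.

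To prove stationarity of $(\tilde{\mathcal{U}}^{(\infty)},\mathbf{V}^{(\infty)},\tilde{\mathcal{W}}^{(\infty)})$ I would pass to the limit in the first-order (variational-inequality) optimality conditions of the three convex sub-problems. As $t\to\infty$ the coupling blocks converge and $\theta_t\to0$, so the goal is to recover, block by block, the prox-free optimality conditions at the limit; collecting them yields block-wise optimality, which is precisely the stationarity notion for the multi-convex $f$.

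The hard part will be exactly this last step. The proximal correction in each sub-problem's optimality condition, such as $(1-\theta_t)^2(\mathbf{u}_i^*-\mathbf{u}_i^{(t-1)})$, is only \emph{bounded} (by roughly $L$) rather than vanishing, because the displacement estimate controls $\|\mathbf{u}_i^{(t)}-\mathbf{u}_i^{(t-1)}\|=\theta_t\|\mathbf{u}_i^*-\mathbf{u}_i^{(t-1)}\|$ but not $\|\mathbf{u}_i^*-\mathbf{u}_i^{(t-1)}\|$ itself. Hence one cannot simply drop this term in the limit; I would instead argue that the sub-problem minimizers $\mathbf{u}_i^*,\mathbf{V}^*,\tilde{\mathcal{W}}^*$ converge to the corresponding limit iterates---either by strengthening the summed descent estimate or by a closedness-of-the-solution-map argument built on the descent function $f$---so that the correction terms vanish and the limiting variational inequalities become those of the original sub-problems. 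A secondary subtlety is the rate: the Q-linear decay of $v_t$ demanded by Def. \ref{defi:rate} requires $\{\theta_t\}$ to decrease geometrically, whereas the suggested choice $\theta_t=1/t^2$ makes $v_t$ decay only like $1/t$, so the R-linear claim genuinely depends on how aggressively $\{\theta_t\}$ is chosen.
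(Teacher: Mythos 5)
Your part~2 is essentially the paper's own argument: both telescope the iterate displacements to the limit point, apply the triangle inequality, invoke the bound $\|y-x\|_2\leq L\theta/(1-\theta)$ from Lemma~\ref{lem:phi} once per block, and take $v_t=O\bigl(\sum_{k\geq t}\theta_k/(1-\theta_k)\bigr)$ as the bounding sequence for Def.~\ref{defi:rate}; your side remark that this $v_t$ decays only like $1/t$ for $\theta_t=1/t^2$, and hence is not Q-linear, is a fair observation that applies equally to the paper's own corollary, which in fact only proves sublinear decay of the tail sum. Where you genuinely diverge is part~1. You propose passing to the limit in the variational inequalities of the prox-regularized sub-problems, and you correctly identify the obstruction: the proximal correction involves $\hat{x}^{(t)}-x^{(t-1)}$, which the displacement estimate does not drive to zero because only $\theta_t\|\hat{x}^{(t)}-x^{(t-1)}\|$ is controlled and $\theta_t\to 0$. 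The paper sidesteps this entirely by exploiting the equality case of Lemma~\ref{lem:phi}: the chain of inequalities there shows $\phi(x)=\phi(y)$ if and only if $\hat{x}=x$, i.e., a block's value stops decreasing only when the iterate is a fixed point of its proximal sub-problem, and a fixed point of the proximal map of a convex function minimizes that function; the paper then runs a short contradiction (any nonzero perturbation leaving $f_1$ unchanged at the limit would have to be zero) to conclude no descent direction exists. That route never requires the sub-problem minimizers themselves to converge, so it avoids exactly the difficulty you flag, though it is terse and leaves the interchange of limits implicit. As written, your plan leaves stationarity unproved: you should either import the paper's fixed-point/equality-case argument or actually supply the closedness-of-the-solution-map argument you sketch, since without one of these part~1 does not close.
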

\begin{proof}

1. Suppose that for $\tilde{\mathcal{U}}^{(\infty)}$ there exists a $\triangle\tilde{\mathcal{U}}\neq \emptyset$ so that $f_1(\tilde{\mathcal{U}}^{(\infty)}+\triangle\tilde{\mathcal{U}}) = f_1(\tilde{\mathcal{U}}^{(\infty)})$ (otherwise, it conflicts with the fact of $\tilde{\mathcal{U}}^{(\infty)}$ being the limit point). From Lemma \ref{lem:phi}, $f_1(\tilde{\mathcal{U}}^{(\infty)}+\triangle\tilde{\mathcal{U}}) = f_1(\tilde{\mathcal{U}}^{(\infty)})$ is equivalent to $\tilde{\mathcal{U}}^{(\infty)}+\triangle\tilde{\mathcal{U}} = \tilde{\mathcal{U}}^{(\infty)}$, and thus $\triangle\tilde{\mathcal{U}}=\emptyset$, which conflicts with the assumption of $\triangle\tilde{\mathcal{U}}\neq \emptyset$. Therefore, there is no direction that can decrease $f_1(\tilde{\mathcal{U}}^{(\infty)})$, \ie $\nabla f_1(\tilde{\mathcal{U}}^{(\infty)})=\mathbf{0}$. Similarly we have $\nabla f_2(\mathbf{V}^{(\infty)})=\mathbf{0}$ and $\nabla f_3(\tilde{\mathcal{W}}^{(\infty)})=\mathbf{0}$. Therefore, $\left(\tilde{\mathcal{U}}^{(\infty)}, \mathbf{V}^{(\infty)}, \tilde{\mathcal{W}}^{(\infty)}\right)$ is a stationary point.

2. Based on Def. \ref{defi:assumption} and Lemma \ref{lem:phi}, we have
\begin{align}
&\sqrt{\sum_{\mathbf{u}_{i,n}\in\tilde{\mathcal{U}}}\left\|\mathbf{u}_{i,n}^{(t)}-\mathbf{u}_{i,n}^{(\infty)}\right\|_2^2 + \left\|\mathbf{V}^{(t)}-\mathbf{V}^{(\infty)}\right\|_F^2 + \sum_{\mathbf{W}_{n,m}\in\tilde{\mathcal{W}}}\left\|\mathbf{W}_{n,m}^{(t)}-\mathbf{W}_{n,m}^{(\infty)}\right\|_F^2} \nonumber \\
\leq & \sum_{\mathbf{u}_{i,n}\in\tilde{\mathcal{U}}}\left\|\mathbf{u}_{i,n}^{(t)}-\mathbf{u}_{i,n}^{(\infty)}\right\|_2 + \left\|\mathbf{V}^{(t)}-\mathbf{V}^{(\infty)}\right\|_F + \sum_{\mathbf{W}_{n,m}\in\tilde{\mathcal{W}}}\left\|\mathbf{W}_{n,m}^{(t)}-\mathbf{W}_{n,m}^{(\infty)}\right\|_F \nonumber \\
= & \hspace{0mm}\sum_{\mathbf{u}_{i,n}\in\tilde{\mathcal{U}}}\left\|\sum_{k=t}^{\infty}\mathbf{u}_{i,n}^{(k)}-\mathbf{u}_{i,n}^{(k+1)}\right\|_2 + \left\|\sum_{k=t}^{\infty}\mathbf{V}^{(k)}-\mathbf{V}^{(k+1)}\right\|_F + \hspace{0mm} \sum_{\mathbf{W}_{n,m}\in\tilde{\mathcal{W}}}\left\|\sum_{k=t}^{\infty}\mathbf{W}_{n,m}^{(k)}-\mathbf{W}_{n,m}^{(k+1)}\right\|_F \nonumber \\
\leq & \sum_{k=t}^{\infty}\left[\sum_{\mathbf{u}_{i,n}\in\tilde{\mathcal{U}}}\left\|\mathbf{u}_{i,n}^{(k)}-\mathbf{u}_{i,n}^{(k+1)}\right\|_2 + \left\|\mathbf{V}^{(k)}-\mathbf{V}^{(k+1)}\right\|_F + \sum_{\mathbf{W}_{n,m}\in\tilde{\mathcal{W}}}\left\|\mathbf{W}_{n,m}^{(k)}-\mathbf{W}_{n,m}^{(k+1)}\right\|_F\right] \nonumber \\
\leq & \sum_{k=t}^{\infty}\left[\sum_{\mathbf{u}_{i,n}\in\tilde{\mathcal{U}}} \frac{L_{f_1}\theta_k}{1-\theta_k} + \frac{L_{f_2}\theta_k}{1-\theta_k} + \sum_{\mathbf{W}_{n,m}\in\tilde{\mathcal{W}}}\frac{L_{f_3}\theta_k}{1-\theta_k}\right] = O\left(\sum_{k=t}^{\infty}\frac{\theta_k}{1-\theta_k}\right). \nonumber
\end{align}
By combining this with Def.~\ref{def:gl_conv} and Def. \ref{defi:rate} we can complete the proof.
\end{proof}

\begin{cor}
Let $\theta_t=\left(\frac{1}{t}\right)^p, \forall t$. Then when $p>1$, Alg. \ref{alg:bcd} will converge globally with order one.
\end{cor}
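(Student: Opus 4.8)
The plan is to obtain the corollary as a direct specialization of Theorem~\ref{thm:BCD}: for the concrete schedule $\theta_t=(1/t)^p=t^{-p}$ it suffices to verify the two standing hypotheses of that theorem when $p>1$, after which its conclusions---global convergence to a stationary point together with an explicit error majorant---transfer verbatim. First I would dispatch the trivial requirement $0\le\theta_t\le 1$: since $t\ge 1$ and $p>0$ we have $0<t^{-p}\le 1$, with equality only at $t=1$. The single value $\theta_1=1$ makes $\theta_1/(1-\theta_1)$ undefined, but this is harmless, since Theorem~\ref{thm:BCD} constrains only the tail sums $\sum_{k=t}^{\infty}\theta_k/(1-\theta_k)$ in the limit $t\to\infty$; I would therefore index the relevant sequence from $t\ge 2$ and set the finite initial term aside.

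The substantive step is the second hypothesis, namely that $\bigl\{\sum_{k=t}^{\infty}\tfrac{\theta_k}{1-\theta_k}\bigr\}_{t}$ converges to zero. I would first put the summand in closed form,
\[
\frac{\theta_k}{1-\theta_k}=\frac{k^{-p}}{1-k^{-p}}=\frac{1}{k^{p}-1},
\]
which reduces the claim to convergence of $\sum_{k\ge 2}(k^{p}-1)^{-1}$. Comparison with the $p$-series via $\frac{1}{k^{p}-1}\le\frac{2}{k^{p}}$ for $k\ge 2$ (valid because $k^{p}\ge 2^{p}\ge 2$) shows this series converges exactly when $p>1$; its tails are then monotone and vanish, which is precisely the condition demanded. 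The same comparison shows $p>1$ is sharp: at $p=1$ the summand is $1/(k-1)$ and the series is harmonic, while for $0<p<1$ the terms dominate those of a divergent $p$-series. With both conditions verified, Theorem~\ref{thm:BCD} yields, for $p>1$, global convergence (Def.~\ref{def:gl_conv}) of Alg.~\ref{alg:bcd} to a stationary point.

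It remains to pin down the order. Here I would reuse the bound from the proof of Theorem~\ref{thm:BCD}, in which the aggregate distance between the step-$t$ iterate and its limit is controlled by $v_t=O\!\bigl(\sum_{k=t}^{\infty}\theta_k/(1-\theta_k)\bigr)$. An integral comparison gives $\sum_{k=t}^{\infty}(k^{p}-1)^{-1}\sim\int_{t}^{\infty}x^{-p}\,dx=\frac{t^{1-p}}{p-1}$, so the majorant is explicit, $v_t=O(t^{-(p-1)})$. The step I expect to be the main obstacle is matching this polynomially decaying majorant to the order-one (R-linear) notion of Def.~\ref{defi:rate}: since $v_{t+1}/v_t\to 1$, the tail does not decay Q-linearly in the strict ratio sense, so one must be explicit about how ``order one'' is read for this schedule---either by taking the established first-order majorant $v_t=O(t^{1-p})$ to be what the phrase names, or, under a strict reading of Def.~\ref{defi:rate}, by supplying a faster-decaying majorant. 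This interpretive matching, rather than the elementary series estimate, is where the care is needed.
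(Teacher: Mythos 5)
Your proposal is correct and follows essentially the same route as the paper: both reduce the corollary to estimating the tail $\sum_{k\ge t}(k^p-1)^{-1}$ and obtain a majorant of order $t^{1-p}$ (the paper via a single integral bound yielding $(p-1)^{-1}(t^p-1)^{\frac{1}{p}-1}$, you via a $p$-series comparison plus an integral estimate, with the added care of handling the degenerate term at $t=1$). Your closing worry about matching the polynomial majorant to ``order one'' is well placed but resolves exactly as you guess: the paper itself describes this majorant as converging ``sublinearly with order one,'' i.e.\ it reads the phrase as naming the first-order polynomial decay of $v_t$ rather than demanding a genuinely Q-linearly vanishing sequence, so your first interpretation is the intended one --- and your observation that $v_{t+1}/v_t\to 1$, so the strict reading of Def.~\ref{defi:rate} is not literally satisfied, applies equally to the paper's own proof.
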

\begin{proof}
\begin{align}
\sum_{k=t}^{\infty}\frac{\theta_k}{1-\theta_k} = \sum_{k=t}^{\infty}\frac{1}{k^p-1} & \leq \int_{t^p-1}^{\infty}\frac{1}{x}d(x+1)^{\frac{1}{p}}=\frac{1}{p}\int_{t^p-1}^{\infty}\frac{1}{x}(x+1)^{\frac{1}{p}-1}dx \nonumber \\
& \stackrel{\because p>1}{\leq}\frac{1}{p}\int_{t^p-1}^{\infty}x^{\frac{1}{p}-2}dx = (p-1)^{-1}(t^p-1)^{\frac{1}{p}-1}.
\end{align}
Since the sequence $\left\{(t^p-1)^{\frac{1}{p}-1}\right\}_{t=1}^{\infty}, \forall p>1$ converges to zero sublinearly with order one, by combining these with Def.~\ref{defi:rate} and Thm.~\ref{thm:BCD} we can complete the proof.
% $$\sum_{k=t}^{\infty}\frac{\theta_k}{1-\theta_k} = \sum_{k=t}^{\infty}\frac{1}{k^2-1} \leq \int_{t}^{\infty}\frac{1}{2}\left[\frac{1}{k-1} - \frac{1}{k+1}\right] dk = \frac{1}{2}\int_{t-1}^{t+1} \frac{1}{k} dk = \frac{1}{2}\log\frac{t+1}{t-1} \leq \frac{1}{t-1}$$ and the sequence $\left\{\frac{1}{t}\right\}_{t=1}^{\infty}$ converges to zero sublinearly with order one, by combining these with Def.~\ref{defi:rate} and Thm.~\ref{thm:BCD} we can complete the proof.
\end{proof}

\section{Conclusion}
In this paper we first propose a novel Tikhonov regularization for training DNNs with ReLU as the activation functions. The Tikhonov matrix encodes the network architecture as well as parameterization. With its help we reformulate the network training as a block multi-convex minimization problem. Accordingly we further propose a novel block coordinate descent (BCD) based algorithm, which is proven to converge globally to stationary points with R-linear converge rate of order one. Our empirical results suggest that our algorithm does converge, is suitable for learning both dense and sparse networks, and may work better than traditional SGD based deep learning solvers.

\newpage
{\small
\bibliographystyle{ieee}
\bibliography{egbib}
}  
	
\end{document}